\renewcommand{\i}{{\mathrm{i}}}
\newtheorem{definition}{Definition}
\newtheorem*{definition*}{Definition}
\newtheorem{corollary}{Corollary}
\newtheorem*{corollary*}{Corollary}
\newtheorem{theorem}{Theorem}
\newtheorem*{theorem*}{Theorem}
\newtheorem*{claim*}{Claim}
\newcommand{\reals}{\mathbb{R}}
\newcommand{\bC}{\mathbb{C}}
\newcommand{\bA}{\mathbb{A}}
\newcommand{\cD}{\mathcal{D}}
\newcommand{\cH}{\mathcal{H}}
\newcommand{\abs}[1]{\left|#1\right|}
\newcommand{\norm}[1]{\left \Vert#1\right\Vert}
\newcommand{\brac}[1]{\left[{#1}\right]}
\def\moverlay{\mathpalette\mov@rlay}
\def\mov@rlay#1#2{\leavevmode\vtop{%
   \baselineskip\z@skip \lineskiplimit-\maxdimen
   \ialign{\hfil$\m@th#1##$\hfil\cr#2\crcr}}}
\newcommand{\charfusion}[3][\mathord]{
    #1{\ifx#1\mathop\vphantom{#2}\fi
        \mathpalette\mov@rlay{#2\cr#3}
      }
    \ifx#1\mathop\expandafter\displaylimits\fi}
\newcommand{\diff}{\mathrm{d}}
\newcommand{\pderiv}[2]{\frac{\partial #1}{\partial #2}}
\DeclareMathOperator*{\argmax}{arg\,max}
\DeclareMathOperator{\relu}{ReLU}
\DeclareMathOperator{\softmax}{softmax}
\renewcommand{\eqref}[1]{Equation~(\ref{#1})}
\newcommand{\parenth}[1]{\left({#1}\right)}
\newcommand{\handout}[5]{
   \renewcommand{\thepage}{#1-\arabic{page}}
   \noindent
   \begin{center}
   \framebox{
      \vbox{
    \hbox to 5.78in { {\bf (67577) Introduction to Machine Learning}
         \hfill #2 }
       \vspace{4mm}
       \hbox to 5.78in { {\Large \hfill #5  \hfill} }
       \vspace{2mm}
       \hbox to 5.78in { {\it #3 \hfill #4} }
      }
   }
   \end{center}
   \vspace*{4mm}
}
\newcommand{\remove}[1]{}
\begin{document}
\pagenumbering{gobble}
%\includepdf[pages=-]{thesis_hebrew.pdf}

\title{On Complex Valued Convolutional Neural Networks}
\author{Nitzan Guberman}
%\date{}
\maketitle

\begin{center}
\section*{Abstract}
\end{center}

Convolutional neural networks (CNNs) are the cutting edge model for supervised machine learning in computer vision. In recent years CNNs have outperformed traditional approaches in many computer vision tasks such as object detection, image classification and face recognition. CNNs are vulnerable to overfitting, and a lot of research  focuses on finding regularization methods to overcome it. One approach is designing task specific models based on prior knowledge.

Several works have shown that properties of natural images can be easily captured using complex numbers. Motivated by these works, we present a variation of the CNN model with complex valued input and weights. We construct the complex model as a generalization of the real model. Lack of order over the complex field raises several difficulties both in the definition and in the training of the network. We address these issues and suggest possible solutions.

The resulting model is shown to be a restricted form of a real valued CNN with twice the parameters. It is sensitive to phase structure, and we suggest it serves as a regularized model for problems where such structure is important. This suggestion is verified empirically by comparing the performance of a complex and a real network in the problem of cell detection. The two networks achieve comparable results, and although the complex model is hard to train, it is significantly less vulnerable to overfitting. We also demonstrate that the complex network detects meaningful phase structure in the data.

\cleardoublepage
\section*{Acknowledgments}

I would like to thank my supervisor, Prof. Amnon Shashua, who had introduced me to the exciting field of computer vision, and guided me throughout this research. I would also like to express my gratitude for my peers, Nadav Cohen, Or Sharir, Ronen Tamari, Erez Peterfreund, Nomi Vinokurov, Tamar Elazari , Inbar Huberman and Roni Feldman (please forgive me if I forgot someone). They have made this experience much more meaningful, and enjoyable.

Finally, and most importantly, I am greatly thankful to my family for their support. Most especially, I thank my husband Yahel. My gratitude for his help and support is beyond words.

\clearpage

\tableofcontents
\cleardoublepage
\pagenumbering{arabic}

\section{Introduction} \label{intro}

Learning algorithms have had a huge impact on numerous fields in computer science, and found many applications in diverse fields such as computer vision, bioinformatics, robot locomotion and speech recognition. These algorithms avoid hand crafting solutions to specific problems by opting instead to "learn" and adapt according to a set of examples called the training set. A learning algorithm consists of a rough model and a method of tuning its parameters to fit the training set.

Neural networks are an example for such a model. Inspired by the human brain, they are composed of many interconnected simple computational units, whose combination results in an elaborate function. This model was first introduced in the 1940's in \cite{Hebb1949}, and has been studied intermittently in the following years. A major breakthrough occurred in the 1990's, for example in  \cite{Waibel1989,lecun1998gradient,LecunBengioHandbook}, with the advent of convolutional neural networks (CNNs), a restricted form of neural networks specifically adapted to natural images . However, it was not until the past decade that an increase in computational and data resources enabled successful learning with CNNs.

CNNs have been a game changer in computer vision, significantly outperforming state of the art results for many tasks. Examples include image classification \cite{Krizhevsky2012}, object detection \cite{Girshick2014}, and face recognition \cite{Taigman2014}. In the latter, human level performance was reached. In the past years, much of the research in computer vision was focused on utilizing CNNs for new problems, and improving the existing CNN model and its training process. 

One avenue of ongoing effort, is in developing methods to overcome overfitting. Overfitting is the learning algorithm's habit of fitting the training set "too well", at the expense of unseen examples. It is a major challenge with expressive models such as CNNs. One approach for restraining overfitting is by restricting the CNN model based on prior knowledge.

In this work, we suggest a variation of the CNN model, with complex valued input and parameters. Complex numbers have long proved useful for handling images (e.g. the Fourier transform is complex valued), and have been considered in a neural network related context. For example, synchronization effects exist in the human brain, and are suspected to play a key role in the visual system. Such effects are lacking in mainstream neural network implementations. In  \cite{RaoRavishankar2008,Reichert2013}, synchronization was introduced to neural networks via complex numbers, and was used for segmenting images into separate objects. Another notable example for the use of complex numbers in networks is presented in \cite{Bruna2013}. In this work, robust image representations are generated using a degenerate form of a complex valued convolutional network. Using these representations the authors achieved state-of-the-art results in various tasks.

In the following, we first introduce the necessary background, and further discuss the prior work that motivated the complex variant of the CNN model. We then describe the generalization of the model to complex numbers, and address the difficulties encountered  in the construction and optimization of the network. We show, that a complex valued CNN can be seen as a restricted form of a larger real valued CNN, and as such it has the potential of mitigating the effects of overfitting. We further characterize the complex convolution operation, and obtain that complex valued CNNs are well suited for detecting phase structure.

To test the complex network's susceptibility to overfitting, we empirically compare the complex model with an equivalent real one, in a simple problem of cell detection. We show that the networks' performance is similar, but that the complex network has a problematic optimization process. The complex network is seen to be much more resilient to overfitting, and we show that it utilizes phase structure in a similar manner to the prior work presented.

\newpage

\section{Backgroung}
In this chapter the needed background for discussing complex CNNs is laid out. The general supervised learning method is described in section \ref{intro:supervised}. Neural networks, and specifically convolutional neural networks are introduced in section \ref{intro:nn}. 

\subsection{Supervised Learning}\label{intro:supervised}
Many problems in computer vision are complicated enough to pose significant difficulties for ad-hoc algorithms. For example, constructing an algorithm to decide whether an image contains a cat or not is not straightforward. The machine learning approach avoids tailoring specific algorithms for these problems, by allowing computer programs to learn to solve such problems themselves. Supervised learning algorithms are designed to learn and adapt by observing samples of real inputs and their expected outputs.

For example, in a classification problem there are several possible labels that can be assigned to inputs. The goal is to find a classifier that assigns each input (e.g. image) the right label (e.g. "cat" or "not cat"). 
A supervised learning algorithm for this task, is given a training set of inputs and correct labels, and outputs a classifier.

More formally, let $X$ be the input space (e.g. all possible images) and $Y$ the output space (e.g. labels). Let $\cD$ be the probability distribution over $X\times Y$. An inference function describes the connection between the input and the output, \[
f:X\to Y\]

The quality of a learning algorithm is quantified by a loss function, measuring how well the inference function operates on data, not necessarily given in the training set. For every input-output pair $(x,y)$, the loss function $\ell(f(x),y))$ compares $f(x)$  with the correct output $y$, and returns a penalty. The loss of $f$ is the expected loss with respect to all possible inputs, i.e.
\[
L_{\cD}(f)=\mathbb{E}_{(x,y)\sim \cD}\left[\ell(f(x),y)\right]\]

For classification problems, a possible loss function is the $0\backslash 1$ loss, defined by \[
\ell(f(x),y))=\begin{cases}1 & f(x)\neq y \\0 & f(x)=1\end{cases}\]
In this case, $L_{\cD}(f)$ measures the percentage of successful classifications made by $f$, called its accuracy.

The learning algorithm attempts to find the function that minimizes this loss, and so can be formulated as solving the optimization problem:
\[\min_f L_{\cD}(f)\]

This optimization problem is often impossible to solve directly, so an approximated version is solved instead. For one, the "no free lunch" theorem (See chapter 5 in \cite{ShalevBook}), states that it is impossible to learn an unconstrained function. Therefore, the search is restricted to a hypothesis class $\mathcal{H}$ which is chosen according to prior knowledge about the problem. For example, a common hypothesis class is the linear functions. In addition, the probability space $\cD$ is oftentimes unknown, or too complicated to handle, so only a finite training set sampled from $\cD$ is used. The training set $S$ is comprised of pairs of inputs and outputs, drawn i.i.d from $\cD$, i.e. $ S=\{\left(x_i,y_i\right)\}_{i=1}^m \sim D^n $. The revised optimization problem, called the empirical loss minimization (ERM) rule, is given by:
\[\min_{f\in\mathcal{H}}L_S(f)\]
Where $L_S(f)$ is the empirical loss of $f$:	
\begin{equation}\label{eq:ERM}
L_S(f)=\mathbb{E}_{(x,y)\sim U(S)}\left[\ell(f(x),y)\right]=\frac{1}{\abs{S}}\sum_{i=1}^{\abs{S}}\left[\ell(f(x_i),y_i)\right]
\end{equation}

Learning with the ERM rule poses several challenges, including overfitting. A function $f$ is said to be overfitting if it fits the training set, rather than the whole input domain $\cD$. Such a function has a low empirical loss $L_S(f)$, and a high $L_{\cD}(f)$. For example, in the cat detection task, if all the cat images in the training where taken with the same background, a classifier detecting this background would be very successful for the training set, but act very poorly over general images.  

To quantify this notion, we define the approximation and estimation errors. For any $f$, the error $L_{\cD}(f)$ is composed of two parts:\[
L_{\cD}(f) = \underbrace{L_{\cD}(f_0)}_{\epsilon_\text{app}}+\underbrace{L_{\cD}(f)-L_{\cD}(f_0)}_{\epsilon_{\text{est}}}
\]
Where  the approximation error, $\epsilon_\text{app}=L_{\cD}(f_0)$, is the minimal possible error for any function from $\cH$. The estimation error, $\epsilon_\text{est}$, measures the overfitting of $f$, and stems from the fact that the algorithm uses only a sampled training set. Having a large, or expressive, hypothesis class can reduce the approximation error, but risks increasing the estimation error.

The choice of an appropriate hypothesis class is crucial to the success of the learning process, not only due to the trade-off between expressiveness and overfitting. The more expressive $\cH$ is, the larger the training set needed to achieve a low loss. Additionally, the computational complexity of the algorithm changes with the choice of $\cH$, as different classes have learning algorithms with varying complexity.

Artificial neural networks, and specifically the subclass of convolutional neural networks, have recently proven very successful for many computer vision tasks. In the following section these hypothesis classes are presented.

\subsection{Convolutional Neural Networks}\label{intro:nn}

The primary motivation behind neural networks is biological. Neural networks are inspired by the human brain, and as such are built of small computational units that communicate with each other. The combination of many such neurons and connections can execute very complex calculations.

An artificial neural network (ANN) is composed of alternating layers of two types, affine and activation function, as seen for example in figure \ref{fig:ann_example}. In an affine layer, each neuron's value is a weighted sum of the previous layer's neurons.  In an activation function layer, each neuron's value is set to be a non-linear function of exactly one neuron from the previous layer. Typical activation functions are sigmoid and $\tanh$.  

An ANN layer can be represented by a vector of its neurons' values. Given a layer $o$, a following affine layer, $z$, would be $z = W o + b$, for some matrix $W$ and vector $b$ called the layer's weights. An activation function layer would be given by $\forall i\; z_i = f(o_i)$, for some point-wise non-linear function $f$. 
 
\begin{figure}[h!]
	\centering
	\includegraphics[width=0.8\textwidth]{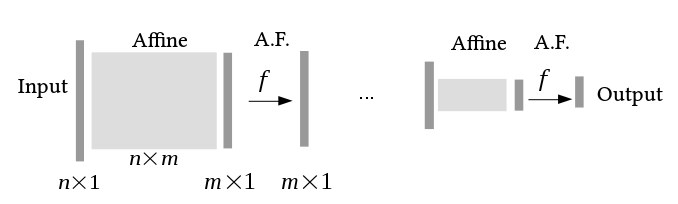}
	\caption{A typical ANN, with an $n$ dimensional input, and consecutive  affine and activation function layers. The architecture specifications include the dimensions of each layer, and the choice of activation function $f$. The weights include the matrices of each affine layer.\label{fig:ann_example}}
\end{figure}

ANNs have been around for decades before  the appearance of CNNs, a restricted form of ANNs especially designed for handling images and other natural signals. This was one of the major breakthroughs which allowed for a new level of performance in many computer vision tasks, such as image classification \cite{Krizhevsky2012}, object detection \cite{Girshick2014}, and face recognition \cite{Taigman2014}. CNNs have been reviewed extensively in the literature, cf. \cite{BengioBook,schmidhuber2015deep,lecun2015deep,LecunBengioHandbook,lecun1998gradient}, and we will present only the needed background for this work.

In CNNs, the neurons in each layer are organized as a three dimensional array rather than as a vector. The first two dimensions are called spatial, and the third is a devision to channels. The CNN model follows three principles characteristic of natural images - locality, sharing and pooling.

The locality property, is the fact that pixels depend only on their neighbors, rather than on far away pixels. Sharing is the restriction that different pixels should undergo the same processing. Demanding that an affine layer adhere to locality and sharing results in a convolution layer. In a convolution layer with input $o$, the $k^{\text{th}}$ channel is given by 
$$
o\ast K^{(k)}+b^{(k)}
$$
Where $\ast$ is the convolution operation, and $\{K^{(k)},b^{(k)}\}$ are the convolution's kernel and bias terms, respectively. The weights of the convolution layer are the kernels and bias terms of all its channels.
A general affine layer is called fully-connected in this context, to contrast it with a convolution layer.

 Pooling is used to induce invariance to small translations, which is a characteristic of natural images. A pooling layer does so by splitting each input channel into patches, and replacing each patch with a single representative value in the output layer. Typical choices the maximal or average value, in $\max$ and average pooling, respectively.

Finally,  CNNs also utilize a new activation function, the rectified linear unit ($\relu$). The $\relu$  point-wise function is given by:
$$
\relu(x)=\begin{cases} x & x\geq 0 \\ 0 & o.w. \end{cases}
$$

Traditional CNNs are composed of several repetitions of convolution, $\relu$ and pooling layers. These layers preserve the three dimensional structure of the input, while the desired output is often of a vector form. To that end, the three dimensional structure is collapsed to a vector, which serves as input to several recurrences of fully connected and $\relu$ layers. 

The architecture of the network is the configuration of its layers, and their specifications, e.g. the kernels' sizes and strides for convolution and pooling layers. An example for a CNN architecture is seen in figure \ref{fig:cnn_example}.

\begin{figure}[h]
	\centering
	\includegraphics[width=0.9\textwidth]{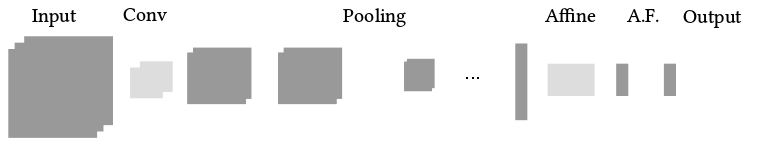}
	\caption{A typical CNN, for a three dimensional input (e.g. an RGB image). The initial layers  are convolution, $\relu$ and pooling operating over three dimensional inputs. The final layers operate over one dimensional inputs, analogous to ANNs.\label{fig:cnn_example}}
\end{figure}

The CNN architecture for a specific problem is manually chosen according to the nature of the problem, prior knowledge and trial and error.

\subsection{Optimization Methods for Training CNNs}\label{intro:opt}

Given a CNN architecture, and a labeled training set $S=\{(x_i,y_i)\}_{i=1}^m$, the learning algorithm finds the weights of the convolution and affine layers. The weights are chosen to minimize the loss function, i.e. they are the solution to the optimization problem

\[\min_{W}\sum_{i=1}^{m}\ell(f(W;x_i),y_i)\]
Where $W$ is the network's weights, $f(W;x_i)$ is the prediction given by the network with weights $W$ for input $x_i$, and $\ell$ is the loss function.

This optimization problem is typically solved using gradient based methods. These are iterative methods that use the first order approximation for the minimized function. In each step, the weights are updated by moving in the direction of the loss function's steepest descent, found by its gradient. Formally, given the weights at time $t$ - $W^{(t)}$, the weights at the next time step are:
\begin{align}\label{eq:GD}
W^{(t+1)}=w^{(t)}-\eta_t\nabla_W\parenth{\sum_{i=1}^{m}\ell(f(W^{(t)};x_i),y_i)}
\end{align}
Where $\eta_t$ is a positive scalar called the learning rate (which may be time dependent), and $\nabla_W$ is the gradient with respect to $W$. If the minimized function is not differentiable, but convex, any value from the sub gradient can replace the gradient in equation \ref{eq:GD}.

The computation of the gradient in \ref{eq:GD} is costly, due to the summation over all elements of $S$. The stochastic version (SGD) is cheaper. In each iteration a fixed sized mini batch $I_t\subseteq \{1,\dots,m\}$ is chosen randomly, and the update is given by

\begin{align*}
W^{(t+1)}=w^{(t)}-\eta_t\nabla_W\parenth{\sum_{i\in I_t}\ell(f(W^{(t)};x_i),y_i)}
\end{align*}

There are many useful variations for gradient descent. We use stochastic gradient descent (SGD) with Nesterov's momentum \cite{Nesterov}, which is very popular for CNN optimization. In this method there is an auxiliary vector $Z^{(t)}$ and an additional scalar learning parameter called the momentum coefficient, denoted by $\mu$. The update is given by

\begin{align*}
Z^{(t+1)}&=\mu Z^{(t)}-\eta_t\nabla_W\parenth{\sum_{i\in I_t}\ell(f(W^{(t)}+\mu Z^{(t)} ;x_i),y_i)}\\
W^{(t+1)}&=w^{(t)}+Z^{(t+1)}
\end{align*}

These methods are general, and can be applied to any function. However, theoretical guarantees exist only for convex functions. The loss functions of neural networks are non convex, but empirical studies have shown that such algorithms work pretty well in this framework. In non convex cases, the initial value of $W^{(0)}$ affects the performance. Common initialization schemes are randomized, for an example consult \cite{Glorot2010}. The initial value $Z^{(0)}$ is set to an all zeros vector.

A popular way for computing the needed gradients in CNNs, is the back propagation algorithm. A detailed explanation of the algorithm is given, for example, in chapter 6 of \cite{BengioBook}. In chapter \ref{comp_backprop} we give a detailed derivation of the variation fit for our model.

\newpage
\section{Motivations - Complex Numbers and Natural Images} \label{motivation}

Our main goal in this work is to construct a complex valued CNN. This idea stems from the fact that CNNs have proven to be very powerful in handling images, and that complex numbers can produce meaningful representations in this domain. In this section we describe different works that discuss ideas in similar directions, and how they motivate us to pursue the complex CNN model.

\subsection{Complex Valued ANNs} \label{comp_anns}

As early as the 1990's there have been attempts to construct complex valued neural networks, for example in \cite{Leung1991,Georgiou1992,Kim2002,Kim2003}. The main motivation behind these attempts was the observation that real valued data is often best understood when embedded in the complex domain. For example, waves are meaningfully represented by their Fourier coefficients.

In these works, the authors use artificial neural networks. They point at the problematic issues with introducing complex values into ANNs, and suggest solutions. These difficulties mainly focus on activation functions and the optimization problem. We will use some of these results in chapter \ref{construction}. The overall conclusion is that complex networks are comparable to the real valued networks in their performance, but there are numerical difficulties in training them.

None of these works discuss CNNs. Many practical methods were developed to allow better training for CNN models, raising hope that the numerical difficulties could be overcome in a CNN framework. Moreover, none of these works handle images, which could greatly benefit from complex representations, as shown in the following sections.

\subsection{Scattering Networks}\label{wavelets}

A Scattering network, first presented in \cite{Bruna2013}, is a restricted type of network  that provides a very good image representation. Using this representation the authors have achieved state of the art results for handwritten digits and texture classification. These networks are based on cascading the wavelet transform in different scales.

A wavelets family\footnote{For a more detailed explanation consult \cite{mallat2008wavelet}.} is composed of a concentrated waveform, and the translations and dilations of it. Waveforms are compactly represented in the complex domain, and so many wavelet families are composed of complex valued functions.
For every wavelet family, an image can be represented by its convolution with every function from the family. These wavelet features serve as a building block for many algorithms in computer vision.

Mallat and Bruna  have extended this idea by constructing scattering networks. In these networks there are alternating layers of convolution with wavelet functions, and the absolute value operator. Each layer outputs a local averaging of its values, and the aggregated outputs serve as a representation. These networks have gained considerable popularity due to their success. 

These networks are based on convolutions, but differ from CNNs in several aspects. First, there are no learned parameters, as the convolution kernels are predetermined wavelet functions. A recent work \cite{bruna2015theoretical}, suggests a similar, data-driven network with the same architecture but learned kernels.  This work is only theoretical, and there haven't been any empirical results yet. 
Second, the kernels are complex valued. However, since every convolution layer is followed by an absolute value operation, the propagating signal never remains complex. 

Given the interest in scattering networks it is only natural to examine what happens if these two constraints are loosened, i.e. if we allow the network to be fully complex, and learn the kernels in a data driven fashion. Our complex CNN model presents these two properties.

\subsection{Synchronization}\label{synchronization}

Despite the recent successes of neural networks, they are still outmatched by the human brain. Many of the processes taking place in the brain are not manifested in the simplified model of neural networks. Thus, a key question is weather any of these processes might allow neural networks to better handle complicated tasks. One candidate mechanism is the synchronization of neuronal signals. 

Neuronal rhythms are prevalent throughout the brain, and suspected to be important for neuronal communication. These are rhythmic patterns of neuronal spikes, i.e. peaks in the neuron's action potential.  Such rhythms are characterized by their average firing rate, and their phase. In conventional neural networks each neuron's output is represented by a single real valued scalar. This suggests an interpretation where each signal is represented only by its average firing rate.
However, relative phase between rhythmic signals might influence the resulting communication. Consider figure \ref{fig:sync} for an example of this effect. These are simulated neuronal rhythms, that are hypothesized to have a key role in neuronal communication. This figure demonstrates how the output rhythm depends not only on the input rates, but also their respective phase.

\begin{figure}[h!]
		\centering
		\includegraphics[width=1\textwidth]{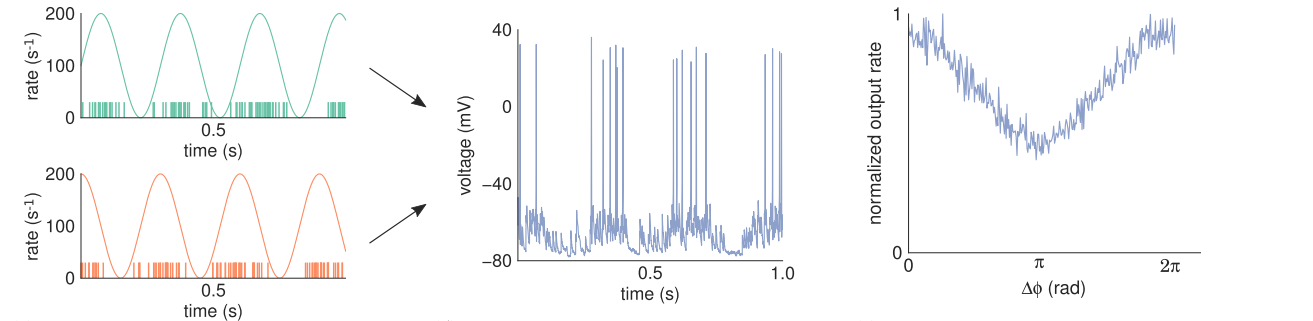}
		\caption{ The two rhythmic signals in the left pane are the input to the neuron whose output is presented in the middle pane. The two inputs have identical average firing rate, and differ in their phase. In the right panel, the graph shows how the resulting output rate depends on the phase difference between the two input signals. Taken from \cite{Reichert2013}. \label{fig:sync}}
\end{figure}

There have been numerous attempts to introduce synchronization into a neural network framework. In  \cite{Reichert2013} and \cite{RaoRavishankar2008}, the authors use complex numbers for this purpose. Based on Boltzmann Machines, they multiply the neuron values by a phase factor $e^{\i\theta}$. The activation function is modified accordingly, and composed of two terms. The classical term which is not affected by the phase, and a new term that is. With this activation function the output rate depends both on the input rates and their relative phases. 

The authors analyze this model and show that it manifests some known effects in neuroscience. One example is grouping, where neurons which respond to the same object share a common phase. They also show this empirically in several experimental setups. This phenomena demonstrates the potential importance of synchronization to computer vision tasks, such as semantic segmentation.

Another example of synchronization and its potential contribution for computer vision is seen in the recent work \cite{WermanComplexHough}. In this work the authors improve the well known Hough transform for finding circles, by introducing complex numbers. The traditional Hough transform is based on the fact that pixels in a circle all have gradients that point towards the circle's center. Each pixel in the image votes in the direction of its gradient, and the votes are accumulated. Pixels with a high score are potential circle centers.

In the variation presented in \cite{WermanComplexHough}, the votes are multiplied by a phase factor that depends on the distance. If $x$ is the voting pixel, then the score for every pixel $x'$ would be multiplied by $e^{\i C\abs{x-x'}}$ for some constant $C$. Votes coming from pixels on the same circle have the same distance to the center, so they have the same phase and their accumulated score is high. Votes originated in noise will typically have non synchronized phases, and will cancel each other out. The authors have demonstrated that this modification  yielded much cleaner results.

Both these works suggest that complex numbers can induce synchronization effects, which can benefit image related tasks. In our work we wish to create similar effects in CNNs.

\subsection{Restricting the Hypothesis Class as a Regularization Method}
\label{regularization_motivation}

It has been proved that CNNs are universal learners (cf. \cite{Maxout,Cohen2015}) i.e. they can implement practically any possible function.  A hypothesis class that is so expressive has a low approximation error, but a high risk of overfitting, which is a major difficulty in training CNNs. Many regularization methods have been designed to overcome overfitting.

Regularization methods can be roughly split into two categories. Methods that are aware of the data and problem at hand, and methods that aren't. In the first group, the methods are general, and can be applied to any CNN. One example, is the weight decay method (cf. chapter 7 in \cite{BengioBook}), which is a general technique in machine learning. The intuition behind this method is the Occam's razor principle, simple models are preferable to complex ones. The learning algorithm minimizes a term that measures the "complexity" of $f$ in addition to the empirical loss (more details can be found in \cite{ShalevBook}). One common measure is the squared $\ell_2$ norm of the weights' vector. 
A specific method for neural networks inspired by the same idea demands that the matrices in affine layers be of low rank \cite{Sainath2013}. 

Two popular methods for regularizing CNNs are dropout \cite{dropout} and dropconnect \cite{Wan2013}. These methods take advantage of the fact that the typical learning algorithm for CNNs is iterative. When applying dropout or dropconnect, a group of neurons or connections in a specific layer is zeroed out during training. The zeroed out group is randomly chosen in each iteration of the learning algorithm. It has been shown, e.g. in \cite{Wager2013}, that this mechanism introduces noise to the training set, reducing the risk of overfitting. 

Reducing the number of parameters decreases the risk of overfitting, but might increase the approximation error. Regularization methods from the second group exploit some prior knowledge about the data to construct a more compact model, without harming the approximation ability of the hypothesis class. An obvious case is the CNN class itself, which is a special case of ANNs suited for images. Subclasses of CNN have been developed for more specific tasks. One example is locally connected layers created to improve face recognition. In the architecture presented in \cite{Taigman2014}, locally connected layers replace some of the fully connected layers. These are restricted fully connected layers, where each neuron is affected only by its neighbors. Another example is the adjustment of CNNs for handling video streams, where the temporal dependencies between frames is exploited \cite{Ji2013}.

Generally speaking, the methods using prior knowledge are superior. First, because they result in more compact models preferable not only for regularization, but also for real-world applications' requirements. Second, as they make assumptions about the data, the resulting models are often more  interpretable.

We claim that a complex valued CNN can be seen as a regularization method of the second group. Any complex computation can be implemented as a real computation with more variables. We suggest that the restriction to complex calculations of a smaller model, fits the properties of images and certain problems, see \ref{regularization}. Thus, it might serve as a regularization method in these scenarios.

\clearpage
\section{Building a Complex Neural Network}\label{sec:Building_complex}
Convolutional Neural Networks produce the state of the art results for many computer vision tasks. A lot of  work and thought has been put into the CNN model and its specific details to make it work so well (e.g. \cite{LecunBengioHandbook,lecun1998gradient}). This success has prompted many attempts at expanding and improving this model. Inspired by the motivations presented in chapter \ref{motivation}, we consider complex valued CNNs where both inputs and weights are complex. We build our model as a generalization to the real model, with the hope of applying the known practices and shared beliefs about CNNs to its complex valued variation. 

We start by laying down some needed background from complex functions theory in section \ref{comp_bg}. Some of these functions' properties impose difficulties both in the construction of the network and in its optimization. These problems are presented in section \ref{construction}, along with possible solutions. Finally, the full derivation of complex valued gradient descent, and specifically back propagation, is presented in \ref{comp_backprop}.

\subsection{Complex Calculus - Preliminaries }\label{comp_bg}
We start by stating some known results from complex functions theory. Throughout this section we use the following notations for complex numbers:
\begin{align*}
z &=x+\i y\in\bC & x,y\in\reals\\
\end{align*}

And for complex functions:
\begin{align*}
f&:\mathbb{C} \to\mathbb{C}&\\
f(z)&=u(z)+\i v(z)& u,v:\reals\to\reals\\
\end{align*}

First, we point out that the complex field $\bC$ cannot be ordered in a meaningful way, i.e. there is no total ordering of $\bC$ under which the axioms of an ordered field are met. One implication is that the loss function we wish to minimize has to be real valued. To that end, we follow with some needed background about real valued complex functions. We focus on differentiability, as it plays a key role in the optimization process.

\begin{definition}
A complex function $f$ is complex differentiable at $z$, with the derivative $f'(z)$, if the following limit exists\[
f'(z)=\lim_{h\to 0}\frac{f(z+h)-f(z)}{h}
\]
\end{definition}
A function that is complex differentiable everywhere is called \textit{entire}. A very useful equivalent definition is given by the Cauchy-Riemann equations.

\begin{definition}
A complex function $f$ is \textit{complex differentiable} at point $z$ if and only if $u,v$ are differentiable (as real functions) there, and the Cauchy-Riemann equations hold at $z$:
$$
\frac{\partial u}{\partial x} = \frac{\partial v}{\partial y},
\frac{\partial u}{\partial y} = -\frac{\partial v}{\partial x}
$$
\end{definition}

Complex differentiability is a very strong property, much stronger then its real equivalent. For example, if $f$ is real valued, namely $f(z)=u(z)$, then the CR equations reduce to\[
\pderiv{u}{x}=\pderiv{u}{y}=0
\]
If such an $f$ is entire, it is constant.

Another result implied by the above is the Liouville Theorem which states that an entire function that is bounded everywhere is constant.

In the following part of this section we present the Wirtinger derivatives, which will be used to adjust gradient based methods to the complex domain. First, we define the differentials with respect to the variables $z$ and its conjugate $z^*$ :
\begin{definition}\label{diffs}
	\begin{align*}
		\diff z=\diff x+\i \diff y \\
		\diff z^*=\diff x-\i \diff y 
	\end{align*}
\end{definition}
These differentials impose partial derivatives, which are called Wirtinger derivatives.
\begin{definition}\label{Wirtinger}
	The Wirtinger derivatives operators are
	\begin{align*}
		\frac{\partial}{\partial z}:=\frac{1}{2}\left[\frac{\partial}{\partial x} - \i \frac{\partial}{\partial y}\right]\\
		 \frac{\partial}{\partial z^*}:=\frac{1}{2}\left[\frac{\partial}{\partial x} + \i \frac{\partial}{\partial y}\right]
	\end{align*}
\end{definition}

The Wirtinger derivatives have some desirable properties. For one, $z,z^*$ are independent variables as\[
\frac{\partial z}{\partial z^*} =\frac{\partial z^*}{\partial z}=0\]
Also, some dual connections with the conjugate hold for the derivatives as well,
\begin{align}\label{conj}
\frac{\partial f^*(z)}{\partial z} =\left(\frac{\partial f(z)}{\partial z^*}\right)^* ,\quad \left(\frac{\partial f(z)}{\partial z}\right)^* =\frac{\partial f^*(z)}{\partial z^*}
\end{align}
Using the Wirtinger derivatives, we can express the total differential of any complex valued function $f$.
\begin{theorem}\label{tot_diff}
	The differential $\diff f$	of a complex-valued	function $f(z):\bA\to\bC$ with $\bA\subseteq\bC$ can be expressed as 
	$$
	\diff f=\frac{\partial f(z)}{\partial z}dz+\frac{\partial f(z)}{\partial z^*}dz^*
	$$
\end{theorem}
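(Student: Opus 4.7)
The plan is to reduce the statement to the classical total differential from multivariable real calculus and then perform a linear change of variables from $(x,y)$ to $(z,z^*)$ using the relations in Definition \ref{diffs}. Writing $f(z) = u(x,y) + \i v(x,y)$, I view $f$ as a complex-valued function of the two real variables $x,y$, so that (assuming $u,v$ are real-differentiable, which is the standing hypothesis needed for $\diff f$ to make sense) the usual total differential formula applies:
$$
\diff f = \frac{\partial f}{\partial x}\,\diff x + \frac{\partial f}{\partial y}\,\diff y,
$$
where $\tfrac{\partial f}{\partial x} = \tfrac{\partial u}{\partial x} + \i\tfrac{\partial v}{\partial x}$ and similarly for $y$.

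Next, I would solve the $2\times 2$ linear system in Definition \ref{diffs} for $\diff x$ and $\diff y$, obtaining $\diff x = \tfrac{1}{2}(\diff z + \diff z^*)$ and $\diff y = \tfrac{1}{2\i}(\diff z - \diff z^*)$. Substituting these into the real total differential and collecting the coefficients of $\diff z$ and $\diff z^*$ separately gives
$$
\diff f = \tfrac{1}{2}\!\left(\frac{\partial f}{\partial x} - \i\frac{\partial f}{\partial y}\right)\diff z + \tfrac{1}{2}\!\left(\frac{\partial f}{\partial x} + \i\frac{\partial f}{\partial y}\right)\diff z^*.
$$
Comparing the two bracketed expressions with Definition \ref{Wirtinger} identifies them as $\tfrac{\partial f}{\partial z}$ and $\tfrac{\partial f}{\partial z^*}$ respectively, which yields the claimed formula.

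There is no genuine obstacle here: the theorem is essentially a bookkeeping identity that expresses the fact that the Wirtinger operators are the duals of the differentials $\diff z,\diff z^*$ under the change of basis from $(\diff x,\diff y)$. The only point that requires mild care is making sure the Wirtinger operators are applied to $f = u + \i v$ (not separately to $u$ and $v$) and that the imaginary unit factors track correctly through the inversion of Definition \ref{diffs}; once this is done, the matching with Definition \ref{Wirtinger} is immediate.
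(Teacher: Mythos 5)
Your proposal is correct and follows essentially the same route as the paper's proof: write the real total differential of $f$ viewed as a function of $(x,y)$, invert Definition \ref{diffs} to express $\diff x,\diff y$ in terms of $\diff z,\diff z^*$, substitute, and recognize the resulting coefficients as the Wirtinger derivatives. (Your inversion $\diff x = \tfrac{1}{2}(\diff z + \diff z^*)$, $\diff y = \tfrac{1}{2\i}(\diff z - \diff z^*)$ is in fact the corrected form of the slightly garbled expressions printed in the paper, whose subsequent algebra nevertheless agrees with yours.)
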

\begin{proof}
Consider the bivariate functions $F:\reals^2\to \bC$ and $U,V:\reals^2\to\reals$ associated to $f(z)$ by\[
\forall z=x+\i y,\quad F(x,y)=U(x,y)+\i V(x,y)=f(z)
\]
The total differential of $F$ is given by
\begin{align}\label{F_tot_diff}
\diff F = \pderiv{F}{x}\diff x+\pderiv{F}{y}\diff y =  \pderiv{U}{x}\diff x+\i  \pderiv{V}{x}\diff x+\pderiv{U}{y}\diff y+\i\pderiv{V}{y}\diff y
\end{align}
By using the differentials defined above, we can write\[
\diff x = \frac{1}{2}\parenth{\diff z +\i \diff z^*},\quad \diff y = \frac{1}{2\i}\parenth{\diff z -\i \diff z^*}
\]
Obtaining
\begin{align*}
\diff F &= \frac{1}{2}\brac{\pderiv{}{x}\parenth{U+\i V}-\i\pderiv{}{y}\parenth{U+\i V}}\diff z+ \frac{1}{2}\brac{\pderiv{}{x}\parenth{U+\i V}+\i\pderiv{}{y}\parenth{U+\i V}}\diff z^*=\\
&=\frac{1}{2}\brac{\pderiv{F}{x}-\i\pderiv{F}{y}}\diff z+ \frac{1}{2}\brac{\pderiv{F}{x}+\i\pderiv{F}{y}}\diff z^*= \frac{\partial f}{\partial z}dz+\frac{\partial f}{\partial z^*}dz^*
\end{align*}
\end{proof}

Considering a a real valued function $f:\bA\to\reals$ for some $\bA\subseteq\bC$, its total differential can be expressed using the Wirtinger derivatives, as seen in the following theorem.

\begin{theorem}\label{real_diff}
Let  $\bA\subseteq\bC$, and $f:\bA\to\reals$ be a real valued function. The total differential of $f$ is given by \[
\diff f=2\Re\parenth{\pderiv{f}{z}\diff z}=2\Re\parenth{\pderiv{f}{z^*}\diff z^*}
\]
\end{theorem}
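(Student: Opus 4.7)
The plan is to derive this identity as a direct consequence of the general total differential formula (Theorem \ref{tot_diff}) combined with the conjugation identities for Wirtinger derivatives in equation (\ref{conj}), exploiting the extra structure imposed by $f$ being real-valued.

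First, I would apply Theorem \ref{tot_diff} to write
\[
\diff f = \pderiv{f}{z}\diff z + \pderiv{f}{z^*}\diff z^*.
\]
The key observation is that since $f$ is real-valued, $f = f^*$ pointwise. Applying the conjugation identities in (\ref{conj}) then yields
\[
\pderiv{f}{z^*} = \pderiv{f^*}{z^*} = \parenth{\pderiv{f}{z}}^*.
\]
Meanwhile, from Definition \ref{diffs} we read off directly that $\diff z^* = \diff x - \i \diff y = (\diff z)^*$.

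Combining these two facts, the second term of $\diff f$ becomes
\[
\pderiv{f}{z^*}\diff z^* = \parenth{\pderiv{f}{z}}^* (\diff z)^* = \parenth{\pderiv{f}{z}\diff z}^*,
\]
so that
\[
\diff f = \pderiv{f}{z}\diff z + \parenth{\pderiv{f}{z}\diff z}^* = 2\Re\parenth{\pderiv{f}{z}\diff z},
\]
using $w + w^* = 2\Re(w)$ for any $w \in \bC$. The second equality in the theorem follows by an entirely symmetric argument, starting instead by conjugating the first term, or alternatively by noting that $\diff f$ is real-valued (since $f$ is), so its conjugate equals itself.

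There is no serious obstacle here; the proof is essentially a bookkeeping exercise. The only subtlety worth being explicit about is that the conjugation rule (\ref{conj}) is being applied to $f$ itself (rather than a composition), which is legitimate because $f = f^*$ as a function, so one can freely substitute one for the other inside the Wirtinger derivative.
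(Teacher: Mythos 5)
Your proof is correct, but it takes a different route from the paper's. You derive the identity structurally: you start from the two-term expansion $\diff f=\pderiv{f}{z}\diff z+\pderiv{f}{z^*}\diff z^*$ of Theorem \ref{tot_diff}, then use the reality of $f$ together with the conjugation identities in \eqref{conj} and the observation $\diff z^*=(\diff z)^*$ to recognize the second term as the complex conjugate of the first, so the sum collapses to $2\Re\parenth{\pderiv{f}{z}\diff z}$. The paper instead argues computationally and does not invoke Theorem \ref{tot_diff} at all: it expands $\pderiv{f}{z}\diff z$ from the raw definitions of the Wirtinger operator and the differentials, takes the real part (using that $\pderiv{f}{x}$ and $\pderiv{f}{y}$ are real), and matches the result against the ordinary real total differential $\pderiv{f}{x}\diff x+\pderiv{f}{y}\diff y$. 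Your version is shorter and makes the underlying symmetry (the two Wirtinger terms are conjugates of each other when $f$ is real) explicit, at the cost of leaning on \eqref{conj}, which the paper states without proof; the paper's version is self-contained modulo the definitions but hides that symmetry inside an ``analogue statement holds for the conjugates'' remark. Your closing caveat --- that \eqref{conj} is applied to $f$ itself via $f=f^*$ --- is exactly the right point to flag, and it is handled correctly.
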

\begin{proof}
	From definitions \ref{diffs} and \ref{Wirtinger} of the Wirtinger differentials and partial derivatives we obtain\[
	\pderiv{f}{z}\diff z = \frac{1}{2}\parenth{\pderiv{f}{x}-\i\pderiv{f}{y}}\parenth{\diff x+\i \diff y}
	\]
	Hence\[
	2\Re\parenth{\pderiv{f}{z}\diff z} = \pderiv{f}{x}\diff x+\pderiv{f}{y}\diff y
	\]
	The analogue statement holds for the conjugates \[
	2\Re\parenth{\pderiv{f}{z^*}\diff z^*} = \pderiv{f}{x}\diff x+\pderiv{f}{y}\diff y
	\]
	
	From the definition of the total differential in equation \ref{F_tot_diff}, for the associated $U$,
	\begin{align*}
	\diff f=\pderiv{U}{x}\diff x +\pderiv{U}{y}\diff x =  \pderiv{f}{x}\diff x+\pderiv{f}{y}\diff y 
	\end{align*}
	Which concludes the proof.
\end{proof}

From theorem \ref{real_diff} we can deduce the following,
\begin{corollary}\label{steepest}
	For the aforementioned $f$, the steepest ascent at point $z$ is obtained by \[
	\diff z =\pderiv{f}{z^*}\diff s
	\]
	Where $\diff s$ is a real-valued differential. Therefore, the steepest ascent's direction is \[
	\pderiv{f}{z^*}
	\]
\end{corollary}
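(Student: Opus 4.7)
The plan is to start from Theorem \ref{real_diff}, which writes the total differential as $\diff f = 2\Re\parenth{\pderiv{f}{z^*}\diff z^*}$, and then ask: among all infinitesimal increments $\diff z \in \bC$ of a fixed modulus, which one maximizes $\diff f$? Identifying $\bC$ with $\reals^2$ via $z = x + \i y \mapsto (x,y)$ makes both ``direction'' and ``modulus'' well-defined notions, and turns the corollary into a standard constrained-maximization question on $\reals^2$.

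For any two complex numbers $a$ and $w$, a short polar-form calculation gives $\Re(a\bar w) = \abs{a}\abs{w}\cos\phi$, where $\phi$ is the angle between $a$ and $w$ viewed as real vectors. Hence $\Re(a\bar w) \leq \abs{a}\abs{w}$, with equality if and only if $w$ is a non-negative real multiple of $a$; this is just the Cauchy-Schwarz inequality on $\reals^2$. Applying the bound with $a = \pderiv{f}{z^*}$ and $w = \diff z$ (so that $\bar w = \diff z^*$) yields $\diff f \leq 2\abs{\pderiv{f}{z^*}}\abs{\diff z}$, with equality precisely when $\diff z$ is a non-negative real multiple of $\pderiv{f}{z^*}$. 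Writing that multiple as a positive real differential $\diff s$ recovers $\diff z = \pderiv{f}{z^*}\diff s$, exactly as claimed, and the steepest-ascent direction is therefore $\pderiv{f}{z^*}$.

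There is no real obstacle here, since the argument is a single Cauchy-Schwarz step layered on Theorem \ref{real_diff}. The only consistency check worth noting is that the alternative form $\diff f = 2\Re\parenth{\pderiv{f}{z}\diff z}$ ought to produce the same optimizer; this follows from the conjugation identity \eqref{conj}, which for a real-valued $f$ (so $f = f^*$) gives $\overline{\pderiv{f}{z}} = \pderiv{f}{z^*}$, and hence both forms of the differential point to the same optimal $\diff z$.
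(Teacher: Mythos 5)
Your proof is correct and follows essentially the same route as the paper: both start from Theorem \ref{real_diff}, observe that for fixed $\abs{\diff z}$ the quantity $2\Re\parenth{\pderiv{f}{z^*}\diff z^*}$ is maximized exactly when $\diff z$ is a non-negative real multiple of $\pderiv{f}{z^*}$, and both invoke the conjugation identity \eqref{conj} to reconcile the two forms of the differential. Your version merely makes the Cauchy--Schwarz step and its equality condition explicit, which the paper leaves implicit in the phrase ``maximized when $\pderiv{f}{z}\diff z$ is real.''
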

\begin{proof}
	According to theorem \ref{real_diff} \[
	\diff f=2\Re\parenth{\pderiv{f}{z^*}\diff z^*}
	\]
	Thus, for a fixed norm $\diff z$, $\diff f$ is maximized when $\pderiv{f}{z}\diff z$ is real, i.e. $\diff z$ is a scaled version of $\parenth{\pderiv{f}{z}}^*=\pderiv{f}{z^*}$, where the equality is obtained by applying equation \ref{conj} for a real valued $f$. Equivalently, $dz^* $ is a scaled version of $\pderiv{f}{z^*}$ which concludes the proof.
\end{proof}

We use this corollary in section \ref{comp_backprop}, where we tackle the challenge of optimizing complex valued CNNs.

\subsection{Network Structure}\label{construction}

We build our complex model as a generalization of real valued CNNs, which handles complex valued input and weights. Many of CNN's building blocks generalize trivially, but for some, the lack of ordering of the complex field makes the generalization tricky. Without total ordering, two general complex numbers are not comparable, and specifically the $\max$ and $\min$ operators are not defined. $\relu$, $\max$ pooling and the optimization problem itself all rely on these operators. In this section we suggest possible generalizations for these building blocks and discuss their pros and cons.

\subsubsection{ReLU}
The most common activation function used by the CNN community is the rectified linear unit, or $\relu$. To avoid confusion, we will refer to this function by $\relu_\Re$ in this section. 
\begin{definition*} 
	$\forall x\in \reals \quad \relu_\Re(x)=\begin{cases} x & x\geq 0\\0 & o.w. \end{cases}$
\end{definition*}

To stay as close as possible to the real model, we construct the complex $\relu$ in the same manner as its real value counterpart. For some connected $A\subseteq\bC$
\begin{align*}
\relu(z)=\begin{cases} z & z\in A\\0 & o.w. \end{cases}\\
\end{align*}
For a complex function $\relu(z)$ to generalize $\relu_\Re$ it should obey 
\[
\forall x\in\reals \; \relu(x)= \relu_\Re(x)\]
Which reduces to 
\begin{align*}
\{z\mid \Im(z)=0,\Re(z)\geq 0\}\subseteq A\\
 \{z\mid\Im(z)=0,\Re(z)< 0\}\not\subseteq A
\end{align*}

Following Occam's razor, the simplest choice is a sector containing the positive real ray. In such a case $A$ can be written as $\{z\mid \arg(z)\in \left[\theta_1,\theta_2\right]\}$ for some $-\pi<\theta_1\leq0\leq\theta_2<\pi$. 
The value of $\theta_2-\theta_1$ controls what portion of the plain is zeroed out. $\theta_1,\theta_2$ can be set in advance, or learned via cross validation. Unfortunately, $\relu$ is not derivable w.r.t  $\theta_1,\theta_2$ so it cannot be learned during the training process like other parameters.
 \remove{In the real $\relu_\Re$ one half of the real line is passed.}  

As the $\relu_\reals$ passes only positive values, an intuitive generalization is to pass values with positive real and imaginary parts. In the above notations this translates to setting $\theta_1=0,\theta_2=\frac{\pi}{2}$, resulting in

	\begin{align*}
		\relu(z)=
		\begin{cases}
		z & \Re\left(z\right),\Im\left(z\right)\geq0\\
		0 & o.w.
		\end{cases}
		=
		\begin{cases}
		z & \arg (z)\in\left[0,\frac{\pi}{2}\right]\\
		0 & o.w.
		\end{cases}
	\end{align*}

\subsubsection{Pooling}\label{pooling}

In a pooling layer, the input is split into patches, and each patch is replaced by one value. In the popular max pooling, this value is the maximal value of the original patch - $\max_{z\in\text{patch}}{z}$. Since the $\max$ operator is not defined for complex numbers, it does not generalize trivially. We suggest two possible generalizations, max-by-magnitude which is based on projection, and max-by-softmax.

A simple way to compare values in $\bC$ is by comparing their projection to $\reals$. Natural projections include $\phi(z)=\abs{z},\angle{z},\Re(z),\Im(z)$. Given a projection $\phi:\bC\to\reals$ the complex valued max pooling is given by  $\argmax_{z\in\text{patch}}{\phi(z)}$. Using $\argmax$ instead of $\max$ is desirable for two reasons. First, it sets the output value to be one of the input values, similarly to the real valued case. Second, it enables the values of the network to stay complex throughout the computation, as we wish to allow in this model.

Considering the suggested projections, $\phi(z)=\abs{z}$ is the only reasonable choice. The $\Re(z),\Im(z)$ projections are not suitable for this purpose, as they favor one of the real and imaginary parts over the other, while the other operations in the network do not differentiate between them. The argument, $\arg(z)$, is periodic by nature, and so senseless for comparison purposes. The magnitude is a reasonable measure, and we suggest the max-by-magnitude pooling, given by
\[
\argmax_{z\in\text{patch}}{\abs{z}}\]

Max-by-magnitude pooling generalizes the real valued max pooling only for non negative inputs. If the patch contains positive and negative real values the results of the two might differ. For example, if the values in the patch are $\{-5,2\}$ then $\argmax_{z\in\text{patch}}{z}=2$ while $\argmax_{z\in\text{patch}}\abs{z}=-5$. However, in typical CNNs, a pooling layer follows a $\relu$ layer, which prevents this scenario.

Another possible approach of generalizing $\max$ pooling is based on presenting the $\max$ operator as a limit of parametrized functionals. If these functionals are well defined in the complex domain, they can be used for pooling in the complex case.

One possible family is the softmax \footnote{There are many definitions to the softmax operator. We use this definition because it is well defined for complex input.} functionals, defined by:
\begin{definition} [Softmax]
For every $\alpha\in \reals$, and $\{x_i\}_{i=1}^n\in\reals^n$
	$$
	\softmax_\alpha(\{x_i\}_{i=1}^n) = \frac{\sum_{i}x_{i}e^{\alpha x_{i}}}{\sum_{j}e^{\alpha x_{j}}}
	$$	
\end{definition}

By taking $\alpha$ to the limits of $\pm \infty$ and $0$ we obtain that for every $\{x_i\}_{i=1}^n\in\reals^n$
\begin{align*}
	\frac{\sum_{i}x_{i}e^{\alpha x_{i}}}{\sum_{j}e^{\alpha x_{j}}}&\to
	\begin{cases}
	\max_i{x_{i}} & \alpha\to\infty\\
	\frac{1}{n}\sum_{i}x_{i} & \alpha\to0\\
	\min_i{x_{i}} & \alpha\to-\infty
	\end{cases}		
\end{align*}

The different limits of the softmax can prove beneficial also to real valued networks, as they offer a smooth transition between max, average and min. Max and average pooling are both used in CNNs, and it is not always easy to predict which one will perform better. The ability to transfer smoothly between them might create some intermediate operator that would increase performance. Moreover, the parameter $\alpha$ can be learned in the training process, reducing some of the necessary cross validation between architectures.

This family generalizes naturally for complex inputs $\{z_i\}_{i=1}^n\in\bC^n$. Which induces max-by-softmax, for which the output for every patch is given by $\softmax_{z\in \text{patch}}z$.

To examine the limits for the complex case, let $z_i=x_i+\i y_i$ for every $i$, and denote $x_{i_0}=\max_ix_i$. By taking the limit of $\alpha\to\infty$ we obtain
\begin{align*}
	\softmax_\alpha(\{z_i\}_{i=1}^n) &= \frac{\sum_{i}z_{i}e^{\alpha z_{i}}}{\sum_{j}e^{\alpha z_{j}}}\\
	&=\frac{\sum_{i}r_{i}e^{\alpha x_{i}}e^{\i\left(\theta_{i}+\alpha y_{I}\right)}}{\sum_{j}e^{\alpha x_{j}}e^{\i\alpha y_{j}}}\\
	&=\frac{e^{\alpha x_{i_{0}}}\sum_{i}r_{i}e^{\alpha\left(x_{i}-x_{i_{0}}\right)}e^{\i\left(\theta_{i}+\alpha y_{I}\right)}}{e^{\alpha x_{i_{0}}}\sum_{j}e^{\alpha\left(x_{j}-x_{i_{0}}\right)}e^{\i\alpha y_{j}}}=\\&=\frac{r_{i_{0}}e^{\i\left(\theta_{i_{0}}+\alpha y_{I_{0}}\right)}+\sum_{i\neq i_{0}}\underbrace{e^{\alpha\left(x_{i}-x_{i_{0}}\right)}}_{\to0}r_{i}e^{\i\left(\theta_{i}+\alpha y_{i}\right)}}{e^{\i\alpha y_{i_{0}}}+\sum_{j\neq j_{0}}\underbrace{e^{\alpha\left(x_{j}-x_{i_{0}}\right)}}_{\to0}e^{\i\alpha y_{j}}}\\&\to r_{i_{0}}e^{i\theta_{i_{0}}}=x_{i_{0}}+\i y_{0}
\end{align*}
In a similar fashion, we obtain three limits, analogous to the real case:
\begin{align*}
	\softmax_\alpha(\{z_i\}_{i=1}^n) =&\to\begin{cases}
	\arg\max_{z_{i}}\Re\left(z_{i}\right) & \alpha\to\infty\\
	\frac{1}{n}\sum_{i}z_{i} & \alpha\to0\\
	\arg\min_{z_{i}}\Re\left(z_{i}\right) & \alpha\to-\infty
	\end{cases}		
\end{align*}

These limits share the flexibility proposed by the real softmax. However, they contain an inherent symmetry breaking between the real and imaginary parts. This is unwanted in the context of pooling, as discussed earlier in the context of the projections $\Re$ and $\Im$. We suggest a possible way to overcome this is by defining a "dual operator" defined by
\begin{definition}[Dual Softmax]
	$$
	\softmax^*_\alpha(\{z_i\}_{i=1}^n) = \softmax_\alpha(\{\i z^*_i\}_{i=1}^n)
	$$	
\end{definition}

The limits of the dual operator are similar to the $\softmax$'s limits with the imaginary part instead of the real part:
\begin{align*}
	\softmax^*_\alpha(\{z_i\}_{i=1}^n) =&\to\begin{cases}
	\arg\max_{z_{i}}\Im\left(z_{i}\right) & \alpha\to\infty\\
	\frac{1}{n}\sum_{i}z_{i} & \alpha\to0\\
	\arg\min_{z_{i}}\Re\left(z_{i}\right) & \alpha\to-\infty
	\end{cases}		
\end{align*}

A pooling layer can be constructed by a combination of the two, either by applying both in different channels or by using some linear combination of the two. 

\subsubsection{Projection Layer}\label{loss_real}

In many applications the labels are real valued, and so is the network's output. For example, in  a classification task with $k$ classes, the last layer of the network is typically a vector with $k$ entries. This vector is normalized to have positive values that sum up to one, and interpreted as a probability vector, where the $i^{th}$ coordinate's value is the probability that the input belongs to the $i^{th}$ class. Consequently, the output vector has to be real valued.

To that end we add a projection layer, which is a special case of an activation function layer. An obvious choice in many cases is projection by magnitude, for the same reasons discussed earlier. Numerical considerations which will be elaborated in the following sections suggest using the squared magnitude.

\subsection{Network Optimization - Complex Backpropagation} \label{comp_backprop}

The common way to train a neural network, i.e. to minimize its loss function $\ell(W)$, is by using the iterative gradient descent algorithm presented in \ref{intro:opt}. Starting with an initial value for $W$, at each iteration the weights are updated by adding a step in the direction of $\ell$'s steepest descent,  given by the opposite to the gradient. In the complex case, the loss function is real valued with complex weights. Such a function is not differentiable everywhere, and it's steepest descent direction cannot be calculated using the gradient. To that end we use the Wirtinger derivatives presented in section \ref{comp_bg}, and specifically the multivariate generalization of corollary \ref{steepest}.

Throughout this section we use the following notations regarding real valued multivariate functions. Given a scalar function $\ell$ , we denote its gradient with respect to its variables matrix $A$ by $\pderiv{\ell}{A}$. I.e. $\pderiv{\ell}{A}$ is a matrix, where for every index $[i,j]$\[
\pderiv{\ell}{A}[i,j]=\pderiv{\ell}{A[i,j]}
\]
Similarly, given a non scalar function $X_{t+1}$ with variables $X_{n}$ we denote the Jacobian of $X_{n+1}$ with respect to $X_n$ by $\pderiv{X_{n+1}}{X_n}$. I.e. for all possible indices $[p,q,i,j] $:\[
\pderiv{X_{n+1}}{X_n}[p,q,i,j]=\pderiv{X_{n+1}[p,q]}{X_n[i,j]}
\]

Denoting the complex valued weights by $W=A+\i B$, the multivariate generalization of corollary \ref{steepest} suggests that the gradient descent step should by taken in the direction
\begin{align}\label{comp_gradient}
-\parenth{\pderiv{\ell}{A}+\i \pderiv{\ell}{B}}
\end{align}
	
In neural networks, the gradient is typically computed using the backpropagation algorithm. In this section we describe the adapted backpropagation algorithm for calculating the derivatives of equation \ref{comp_gradient}.

Consider the $n^{th}$ layer of a complex valued network, with input, weights and output denoted by $Z_n,W_n$, and $Z_{n+1}$ respectively. Denote the real and imaginary parts by\[
Z_n=X_n+\i Y_n,\quad W_n=A_n+\i B_n
\]
Denote the derivatives of the loss with respect to the input by 
\begin{align*}
\delta_n=\delta_n^\Re+\i \delta_n^\Im=\pderiv{\ell}{X_n}+\i \pderiv{\ell}{Y_n}\\
\end{align*}
The backpropagation's output is the derivatives with respect to the weights, \[
\pderiv{\ell}{A_n}+\i \pderiv{\ell}{B_n}
\]

 The backpropagation algorithm is composed of two passes, forward and backward. In the forward pass, the values of each layer, $Z_n$, are computed according to the network's architecture, form the first layer to the final $N^{th}$ layer. In the backward pass, the final layer's gradient $\delta_N$ is computed, and then $\delta_n$ and $\pderiv{\ell}{A_n}+\i \pderiv{\ell}{B_n}$ are computed for every $n$ is reverse order, according to the chain rule. Finally, the algorithms output is the concatenation of $\pderiv{\ell}{A_n}+\i \pderiv{\ell}{B_n}$ for all layers.

%The Jacobians in equation \ref{chain_rule} depend solely on the $n^{th}$ layer.
In the following sections we present how to compute $\delta_n$, and $\pderiv{\ell}{A_n}+\i \pderiv{\ell}{B_n}$, for each type of layer, given $\delta_{n+1}$. Most of the following computations have been done in the past, for example in \cite{Kim2002}.

\subsubsection{Affine layer}
In an affine layer, the output is given by
\[
Z_{n+1}=W_nZ_n+\hat{w}_n\cdot \mathbf{1}^\intercal\\
\]
Where the weights are the matrix $W_n=A_n+\i B_n$ and the vector $\hat{w}_n=\hat{a}_n+\i\hat{b}_n$. When splitting to the real and imaginary parts, we obtain
\begin{align*}
X_{n+1}&=A_nX_n-B_nY_v+\hat{a}\cdot \mathbf{1}^T\\
Y_{n+1}&=A_nY_n+B_nX_v+\hat{b}\cdot \mathbf{1}^T
\end{align*}
Which yields the Jacobians
\begin{align*}
\pderiv{X_{n+1}}{X_n}[p,q,i,j]&=A_n[p,i]1_{\brac{q=j}}, \quad \pderiv{X_{n+1}}{Y_n}[p,q,i,j]=-B_n[p,i]1_{\brac{q=j}}\\
\pderiv{Y_{n+1}}{X_n}[p,q,i,j]&=B_n[p,i]1_{\brac{q=j}}, \quad \pderiv{Y_{n+1}}{Y_n}[p,q,i,j]=A_n[p,i]1_{\brac{q=j}}
\end{align*}
Where $1_{\brac{q=j}}=\begin{cases}1 & q=j\\0 & o.w.\end{cases}$.

Applying the chain rule for every index $[i,j]$ yields 
\begin{align*}
\delta_n^\Re [i,j]&= \sum_{pq}\delta^\Re_{n+1}[p,q]A_n[p,i]1_{\brac{q=j}}+\delta^\Im_{n+1}[p,q]B_n[p,i]1_{\brac{q=j}} =\\ &=\parenth{(W_n^\Re)^\intercal\delta_{n+1}^\Re+(W_n^\Im)^\intercal\delta_{n+1}^\Im}[i,j]
\\
\delta_n^\Im [i,j]&= \pderiv{\ell}{Y_n}= \sum_{pq}\delta^\Re_{n+1}[p,q](-B_n)[p,i]1_{\brac{q=j}}+\delta^\Im_{n+1}[p,q]A_n[p,i]1_{\brac{q=j}} = \\
&=\parenth{-(W_n^\Im)^\intercal\delta_{n+1}^\Re+(W_n^\Re)^\intercal\delta_{n+1}^\Im}[i,j]
\\
\end{align*}
Which reduces to the compact form 
\begin{align}
\delta_n = \delta_n^\Re+\i \delta_n^\Im=W_n^H\delta_{n+1}
\end{align}
Where $W_n^H$ is the hermitian conjugate of $W_n$, i.e. for every $i,j$, $W_n^H[i,j] =\overline{W_n[j,i]}$.

Applying the same technique over the weights yields
\begin{align}
\pderiv{\ell}{A_n}+\i\pderiv{\ell}{B_n} = \delta_{n+1}Z_n^H\\
\pderiv{\ell}{\hat{a}_n}+\i\pderiv{\ell}{\hat{b}_n} = \delta_{n+1}\cdot \mathbf{1}
\end{align}

\subsubsection{Activation Function Layer}

In an activation function layer, with the function $f=u+\i v$, the output in index $[i,j]$ is given by \[
Z_{n+1}[i,j]=f(Z_n[i,j])=u(Z_n[i,j])+\i v(Z_n[i,j])
\]
Which translates to \[
X_{n+1}[i,j]=u(Z_n[i,j]),\quad Y_{n+1}[i,j]=v(Z_n[i,j])
\]
Hence the Jacobians are
\begin{align*}
\pderiv{X_{n+1}}{X_n}[p,q,i,j]&=\pderiv{u(Z_n[i,j])}{X_n[i,j]}\cdot1_{\brac{q=j,p=i}}\\
\pderiv{X_{n+1}}{Y_n}[p,q,i,j]&=\pderiv{u(Z_n[i,j])}{Y_n[i,j]}\cdot1_{\brac{q=j,p=i}}\\
\pderiv{Y_{n+1}}{X_n}[p,q,i,j]&=\pderiv{v(Z_n[i,j])}{X_n[i,j]}\cdot1_{\brac{q=j,p=i}}\\
\pderiv{Y_{n+1}}{Y_n}[p,q,i,j]&=\pderiv{v(Z_n[i,j])}{Y_n[i,j]}\cdot1_{\brac{q=j,p=i}}\\
\end{align*}
 And the derivatives reduce to 
 \begin{align*}
 \delta_n^\Re [i,j] &=  \pderiv{\ell}{X_n}=\delta^\Re_{n+1}[i,j]\pderiv{u(Z_n[i,j])}{X_n[i,j]}+\delta^\Im_{n+1}[i,j] \pderiv{v(Z_n[i,j])}{X_n[i,j]}\\
 \delta_n^\Im [i,j]&= \pderiv{\ell}{Y_n}= \delta^\Re_{n+1}[i,j]\pderiv{u(Z_n[i,j])}{Y_n[i,j]}+\delta^\Im_{n+1}[i,j] \pderiv{v(Z_n[i,j])}{Y_n[i,j]} \\
 \end{align*}
Combining the real and imaginary parts yields
 \begin{align}
 \delta_n[i,j]= \delta^\Re_{n+1}[i,j]\parenth{\pderiv{u(Z_n[i,j])}{X_n}+\i\pderiv{u(Z_n[i,j])}{Y_n}}+\\
 \i\delta^\Im_{n+1}[i,j]\parenth{\pderiv{v(Z_n[i,j])}{Y_n}-\i\pderiv{v(Z_n[i,j])}{X_n}}
 \label{eq:af_backprop_complex}
\end{align}
If $f$ is complex differentiable, this translates to a compact form
\begin{equation}
\delta_n[i,j]=\delta_{n+1}[i,j]f'(Z_n[i,j])^*
\label{eq:af_backprop}
\end{equation}

Naively, using the compact form requires that $f$ be entire. However, it is practically sufficient that the update will be correct for a very large portion of the iterations, and that when it doesn't, it will have a finite value. If these conditions are met, the convergence of the iterative algorithm should not suffer.
The $\relu$ activation function meets this condition. It is differentiable everywhere but at $\{z|\Re(z)=0 \text{ or } \Im(z)=0\}$, where the limits are finite.

In the case of a projection activation function layer, $f=u$ is real valued, and so non differentiable. In this case equation \ref{eq:af_backprop_complex} takes the form
\begin{align}
\delta_n[i,j]= \delta^\Re_{n+1}[i,j]\parenth{\pderiv{u(Z_n[i,j])}{X_n}+\i\pderiv{u(Z_n[i,j])}{Y_n}}
\end{align}

\subsubsection{Convolution Layer}

Each output value of a convolution layer is a dot product between a kernel, and an input patch. Thus, if the input is reorganized as a matrix, with each column being one patch, and the weights are organized as a matrix, with one kernel in each row, the convolution is the multiplication of the two matrices.

For the purpose of backpropagation, it is more convenient to use the above observation and express a convolution layer as a composition of three layers: A reorganization layer (to matrix form), an affine layer, and another reorganization layer. The reorganization layers do not change any values, but only their locations.

The backpropagation of reorganization layers is very simple. Let $[i,j]$ be an input index, which is moved by the reorganization layer to the new indices $[i_1,j_1],\dots,[i_d,j_d]$ then\[
\delta_n[i,j]=\sum_{k=1}^d\delta_{n+1}[i_k,j_k]
\]

\subsubsection{Pooling Layer}
A max-by-pooling layer can be represented similarly to the convolution layer by a composition of a reorganization layer, an operation over each column, and another reorganization layer. 

In the case of max-by-magnitude pooling, the value $[i,j]$  is transfered to the output if it has the maximal magnitude in its column. Denote its index in the output by $[p,q]$ then  \[
\delta_n[i,j]=\delta_{n+1}[p,q]
\]
If the value at index $[i,j]$ did not transfer to the output then\[
\delta_n[i,j]=0\]

Softmax pooling can be similarly constructed as a combination of reorganization, affine and activation function layers.

\subsection{Complex Convolution as a Restricted Real Convolution method}\label{regularization}

\begin{wrapfigure}{R}{0.4\textwidth}
	{\includegraphics[width=0.4\textwidth]{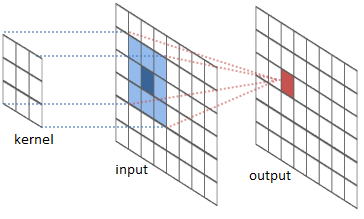}
		\caption{A schematic sketch of the convolution operation. An item in the output is the sum of point-wise multiplication of the kernel and input patch.\label{fig:conv_sketch}}}
	\end{wrapfigure}
A real valued convolution operation takes a matrix and a kernel (a smaller matrix), and outputs a matrix. The matrix elements are computed using a sliding window with the same dimensions as the kernel. Each element is the sum of the point-wise multiplication of the kernel and matrix patch at the corresponding window.  Figure \ref{fig:conv_sketch} shows a schematic representation of a convolution.

We will use here the dot product to represent the sum of a point-wise multiplication between two matrices:

\[
X\cdot A = \sum_{ij}X_{ij}A_{ij}
\]

In the complex generalization, both kernel and input patch are complex valued. The only difference stems from the nature of multiplication of complex numbers. When convolving a complex matrix with the kernel $W=A+\i B$, the output corresponding to the input patch $Z=X+\i Y$ is given by
\begin{align}
Z\cdot W = \parenth{X\cdot A-Y\cdot B}+\i\parenth{X\cdot B+ Y\cdot A}\label{eq:comp_patch}
\end{align}

To implement the same functionality with a real valued convolution, the input and output should be equivalent. Each complex matrix is represented by two real matrices, stacked together in a three dimensional array. Denoting this array $[X,Y]$, it's equivalent to $X+\i Y$. $X$ and $Y$ are the array's channels.

A two channeled input, convolved with a two channeled kernel, results in a one channeled matrix. The dot product between a kernel $[A,B]$ and an image patch $[X,Y]$ is given by: 
\begin{align*}
X\cdot A+Y\cdot B
\end{align*}
Convolution with multiple kernels produces multiple channels. Specifically, when convolving with two kernels $[A_1,B_1],[A_2,B_2]$, the output corresponding to the patch $[X,Y]$ is given by
\begin{align}
[X\cdot A_1+Y\cdot B_1,X\cdot A_2+Y\cdot B_2]\label{eq:real_patch}
\end{align}

Comparing equations \ref{eq:comp_patch} and \ref{eq:real_patch}, it is clear that given a complex convolution with kernel $A+\i B$, an equivalent real convolution has two kernels of the form $[A,-B]$ and $[B,A]$, as seen in figure  \ref{fig:sketch_conv_comp_and_real}.

\begin{figure}[h!]
		\centering
	\begin{subfigure}[Complex valued convolution]
		{\includegraphics[width=0.45\textwidth]{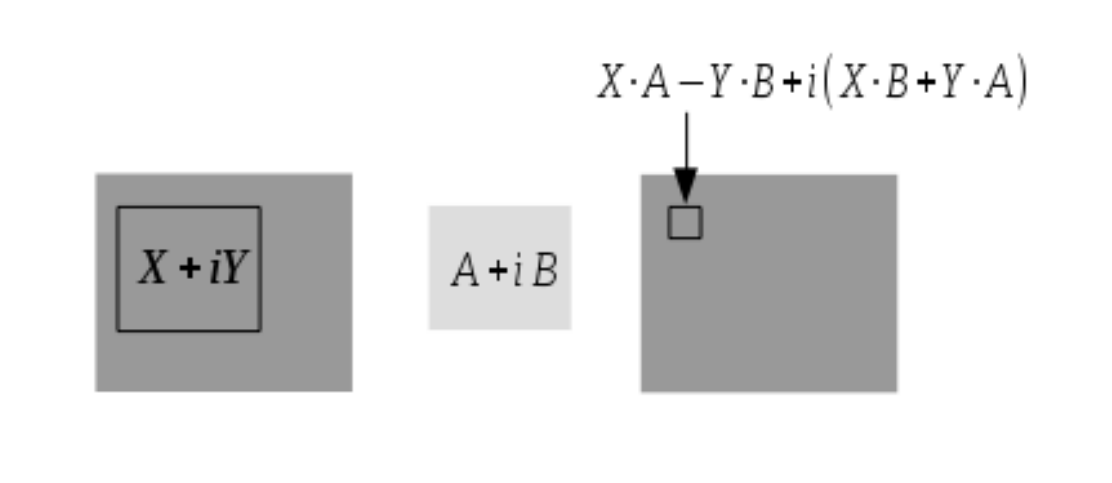}\label{fig:sketch_comp_conv}}
	\end{subfigure}
	\quad
	\begin{subfigure}[Real valued convolution]
		{\includegraphics[width=0.45\textwidth]{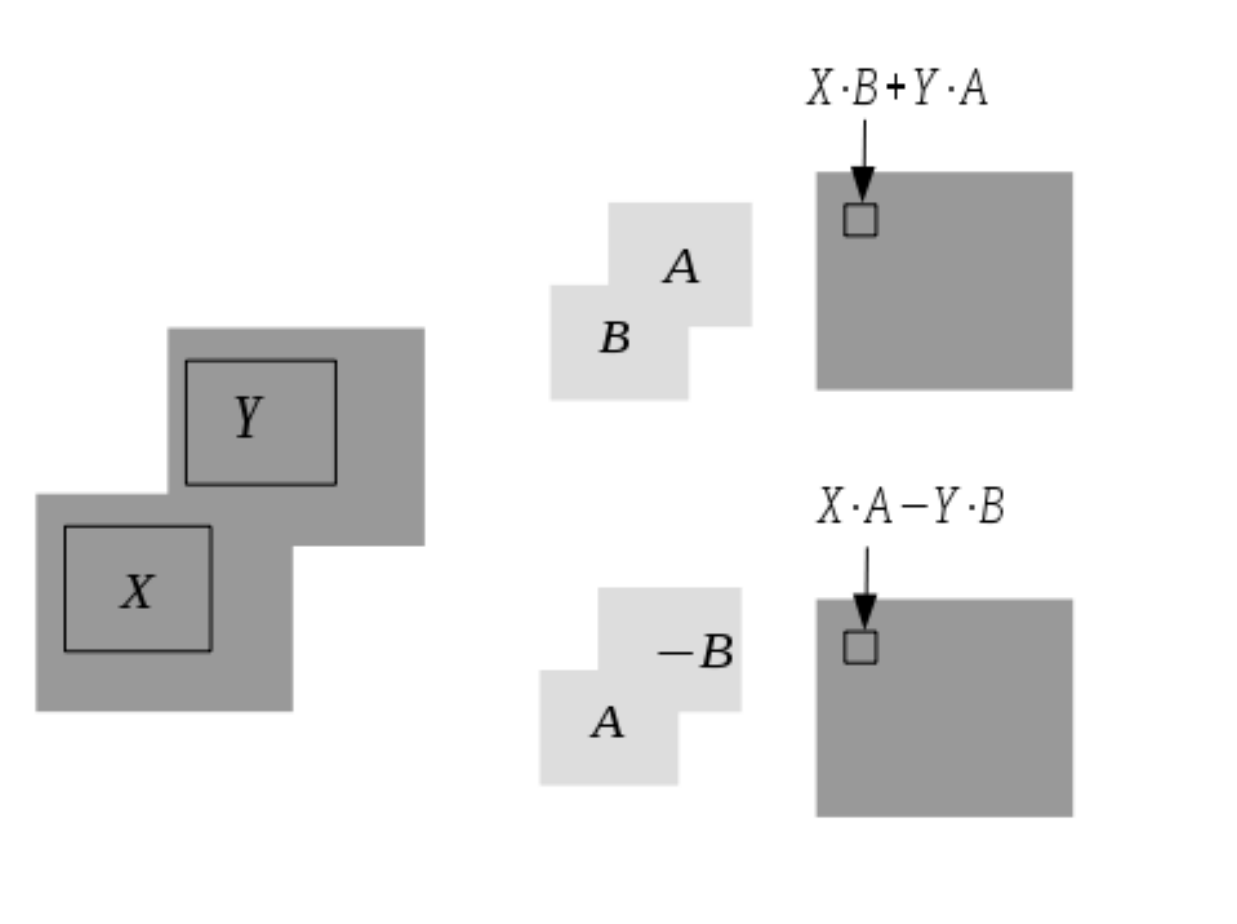}\label{fig:sketch_real_conv}}	
	\end{subfigure}					
	\caption{Equivalent complex and real convolution layers. \subref{fig:sketch_comp_conv} Complex valued convolution, where the output pixel is given by the dot product of the input patch and the kernel. \subref{fig:sketch_real_conv} Equivalent real valued convolution with two channeled input, output and kernels. Convolving with one kernel yields one channel.\label{fig:sketch_conv_comp_and_real}}
\end{figure}

In light of the above, a convolution layer in a complex valued network is a restricted form of a real valued convolution layer with twice as many kernels. 

We note that the equivalence between real and complex networks does not hold in non affine layers. Activation function and pooling layers operate on one channel, so the real valued equivalent layers should operate on two channels, which is not the case. In these layers the complex network can be seen as connecting the channels, rather than the weights.

\subsection{Complex Convolution} \label{comp_conv}
The previous result raises the question for which case is a complex CNN a good classifier. In order to answer this question we analyze the real and complex convolutions.

A real convolution output can be interpreted as a heat map of similarity to the convolved kernel. This view is based on the interpretation of the dot product between two matrices as a similarity measure. Indeed, a dot product between a real patch and a kernel with norm 1, is maximized when they are identical up to a scalar\footnote{The norm over matrices is defined by $\norm{A}=\sqrt{A\cdot A}$, the norm of the vectorized matrix. The dot product $X\cdot A$ scales together with the norm of $A$, therefore the maximization considers only norm 1 kernels.}, i.e.\[
	\argmax_{\norm{A}=1} X\cdot A  =\frac{X}{\left\Vert X\right\Vert }
	\]

To better understand the complex convolution we look at the equivalent complex valued optimization problem. As $Z\cdot W$ is a complex number, we maximize its magnitude,
\[
	\argmax _{\norm{W}=1} \abs{Z\cdot W}
	\]
Denoting \[
\forall i,j\quad Z_{ij}=r_{ij}e^{\i\theta_{ij}}, W_{ij}=t_{ij}e^{\i\nu_{ij}}\]
We obtain the maximization problem
\begin{align*}\label{eq:opt_comp_conv}
\argmax_W \abs{\sum_{ij} r_{ij} t_{ij}e^{\i(\theta_{ij}+\nu_{ij})}}
\end{align*}

A geometric interpretation is given by thinking of each complex number as a two dimensional vector. In this view, multiplying $Z_{ij}$ by $W_{ij}$ rotates it by an angle $\nu_{ij}$. The sum of the rotated vectors has maximal magnitude if they all have the same phase and their magnitudes accumulate, otherwise the summed terms cancel each other out. Therefore, the maximizing kernel obeys\[
\forall i,j\quad \nu_{ij}=-\theta_{ij}+C
\]
Or equivalently,\[
W=e^{\i C}\frac{Z^*}{\left\Vert Z\right\Vert }
\]
Where $Z^*$ is the point-wise conjugate of $Z$, and $C$ is some real constant.

Examples for synchronization and cancellation are seen in figure \ref{fig:comp_conv}. The behavior of the point-wise multiplication is similar to the accumulation and noise cancellation used in \cite{WermanComplexHough} to improve the Hough transform, as discussed in section \ref{synchronization}. The global phase factor $C$ does not affect the output's magnitude. The fact that different kernels yield the same magnitude with different angles, introduces some ambiguity in the model, which we refer to as \textit{phase ambiguity}. In chapter \ref{emp} we further address this issue.

With this interpretation, the complex convolution's output can be seen as a heat map where each pixel measures the similarity between the conjugate kernel's and the input patch's phase structure. Combining this notion with the results from the previous section, we conjecture that complex CNNs can serve as a regularized hypothesis class for problems with informative phase structure.

\begin{figure}[h!]
		\centering

		\begin{subfigure}[Patch = $e^{0.375\pi\i}\cdot$kernel$^*$ ]
				{\includegraphics[width=0.3\linewidth]{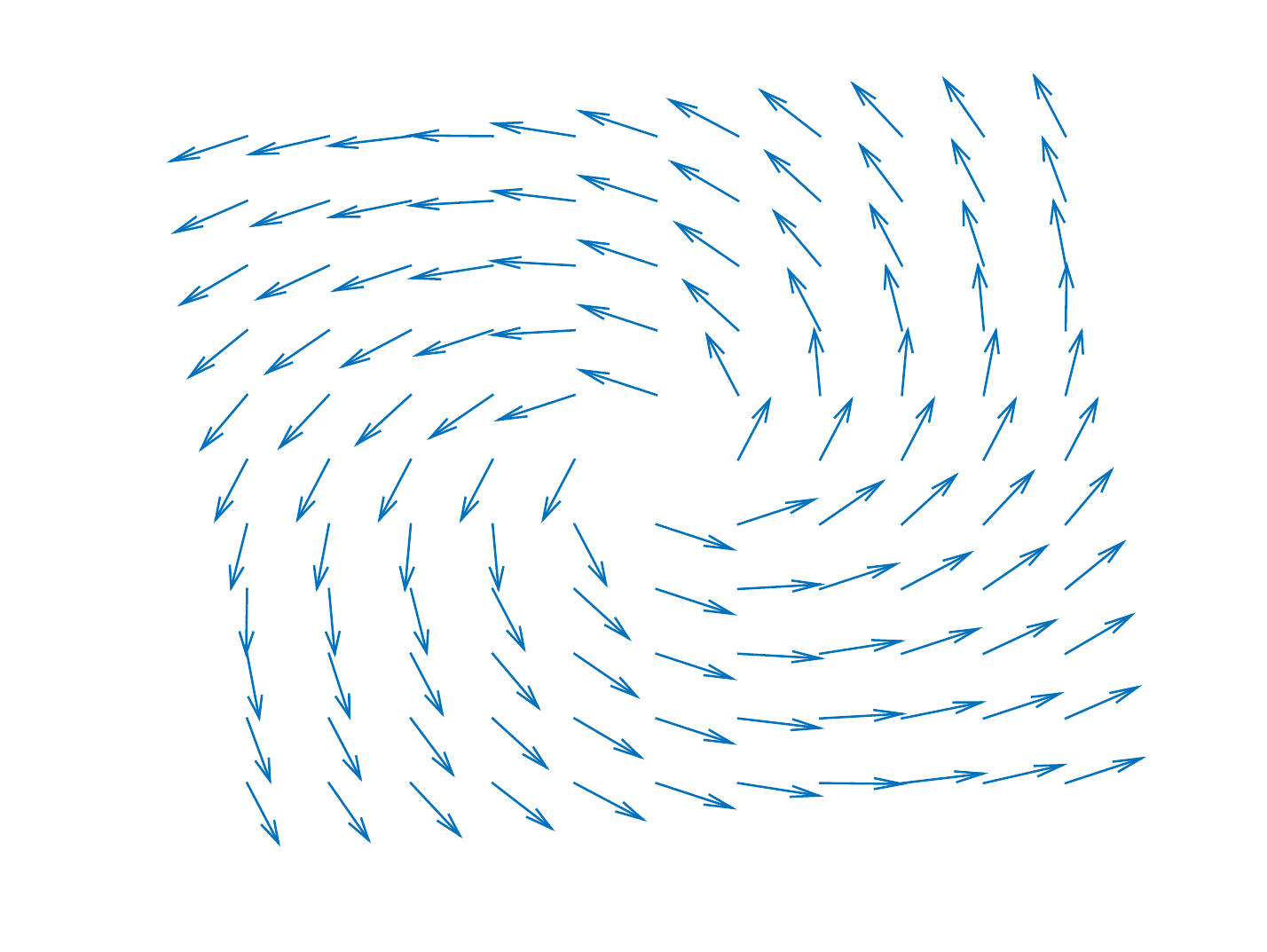}\label{fig:comp_conv_b}}				
		\end{subfigure} 
		\;
		\begin{subfigure}[Conjugate kernel]
			{\includegraphics[width=0.3\linewidth]{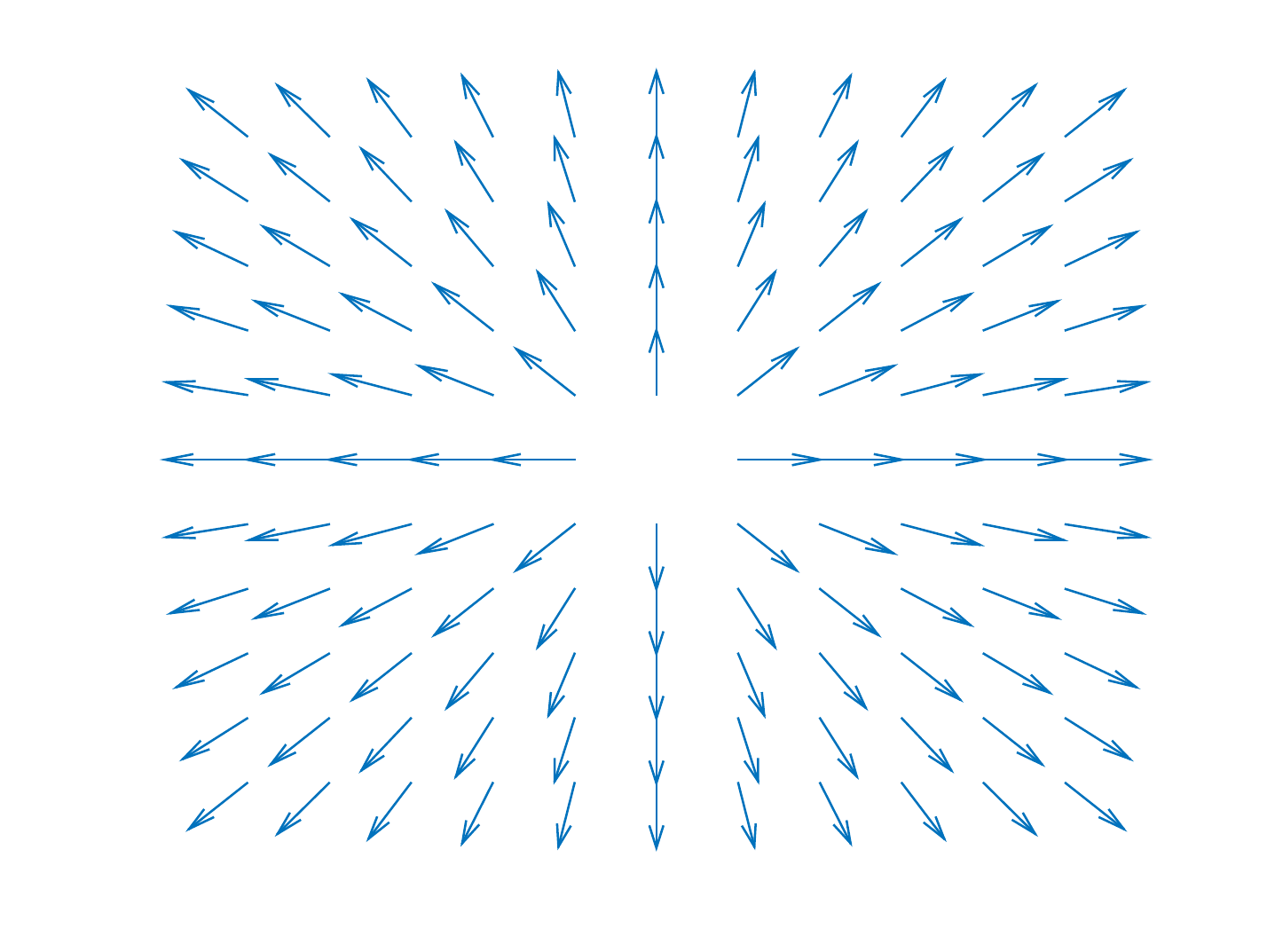}\label{fig:comp_conv_a}}				
		\end{subfigure}
		\;
		\begin{subfigure}[Point wise multiplication]
				{\includegraphics[width=0.3\linewidth]{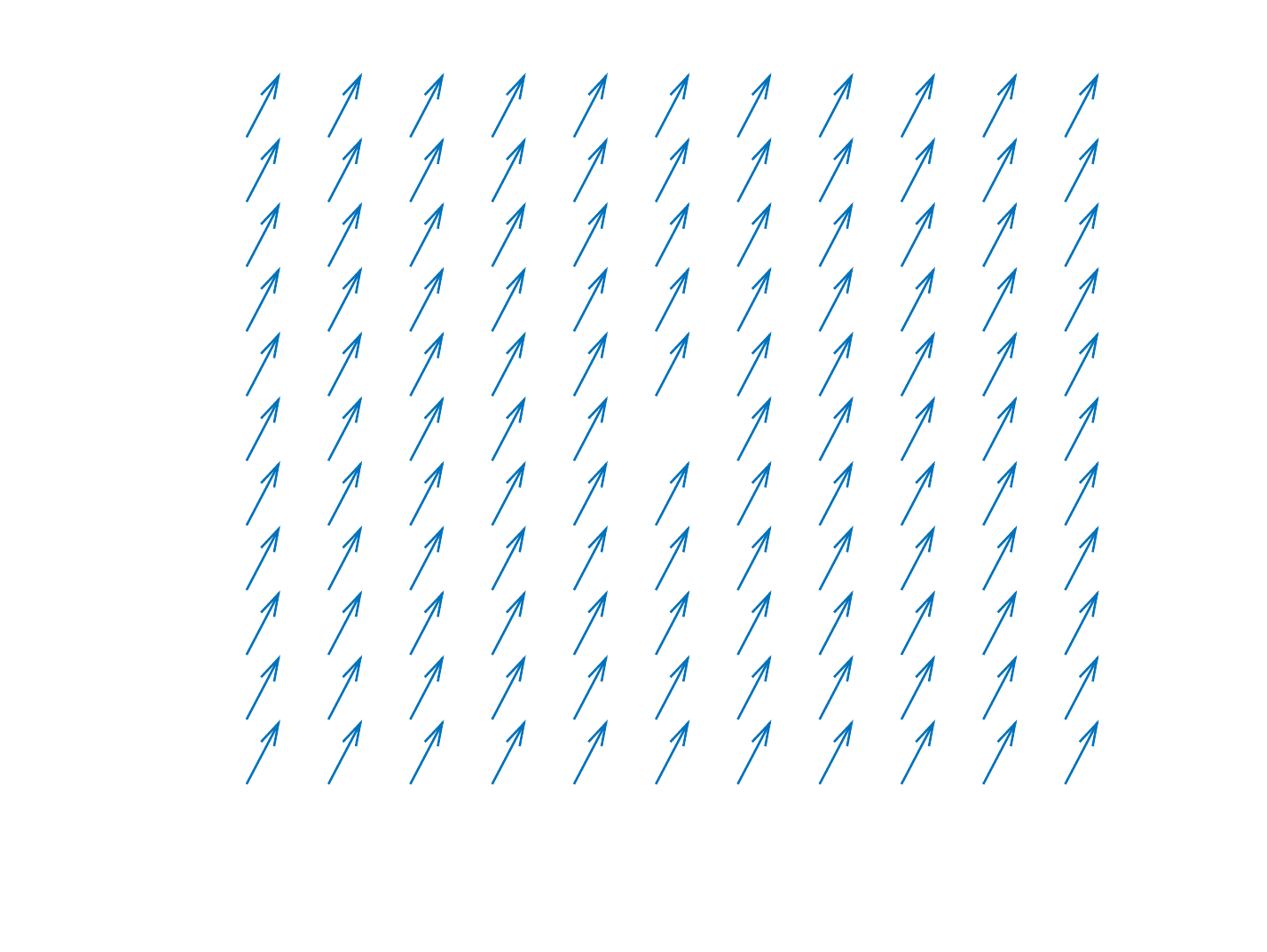}\label{fig:comp_conv_c}}				
		\end{subfigure}

		\begin{subfigure}[Random patch]
				{\includegraphics[width=0.3\linewidth]{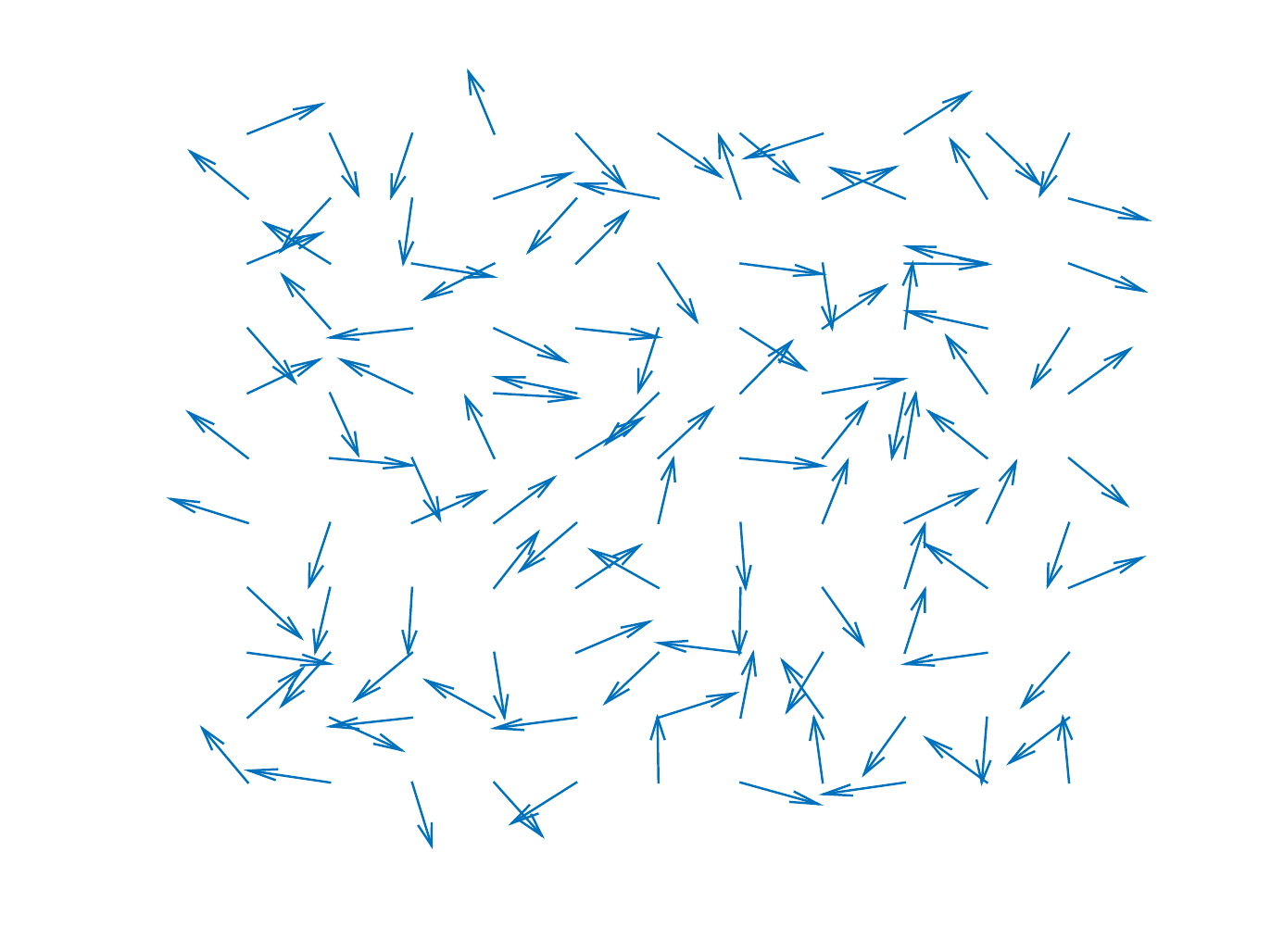}\label{fig:comp_conv_e}}				
		\end{subfigure} 
		\;
		\begin{subfigure}[Conjugate kernel]
			{\includegraphics[width=0.3\linewidth]{kernel_conj}\label{fig:comp_conv_d}}				
		\end{subfigure}
		\;
		\begin{subfigure}[Point wise multiplication]
				{\includegraphics[width=0.3\linewidth]{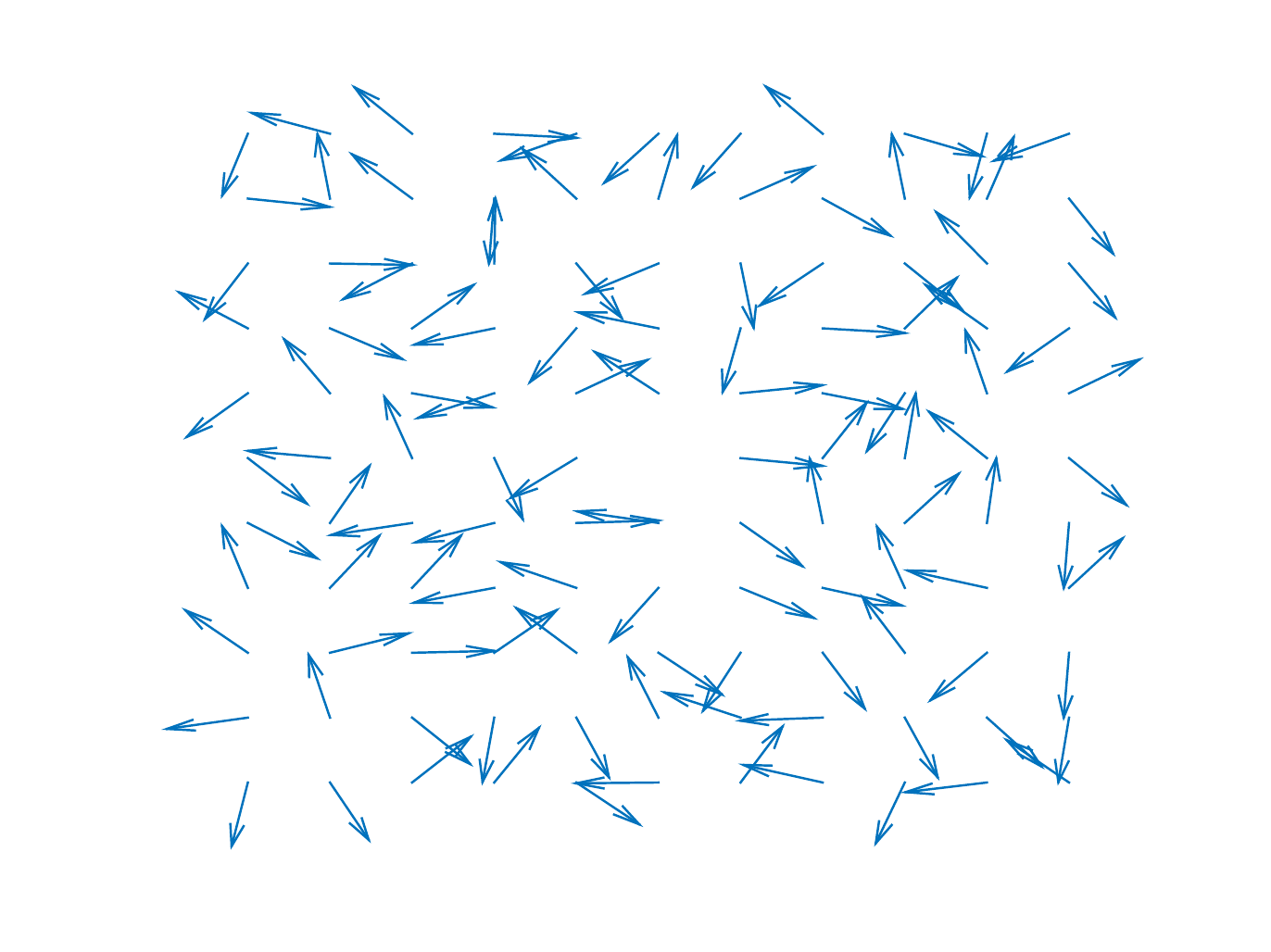}\label{fig:comp_conv_f}}				
		\end{subfigure}
		\caption{Examples of the synchronization effects of the point-wise multiplication. In the upper row the input patch \subref{fig:comp_conv_b} and conjugate kernel \subref{fig:comp_conv_a} share the same phase structure up to a point-wise multiplication by $e^{\i \frac{3\pi}{8}}$. The point-wise multiplication result in \subref{fig:comp_conv_c} is synchronized, all values have the same phase. In the bottom row the input patch \subref{fig:comp_conv_e} has no meaningful phase structure, and so does the multiplication in \subref{fig:comp_conv_f}. Both patches has mean magnitude of $1$, but the sum of the values in \subref{fig:comp_conv_c} has a magnitude over $20$ times larger then the sum of \subref{fig:comp_conv_f}.
\label{fig:comp_conv}}
\end{figure}

This characterization implies that the input of complex CNNs should be a complex valued representation with a meaningful phase structure. In the case of images, possible complex representations include the Fourier representation, wavelets, gradients, and optical flow. The Fourier representation does not preserve the locality properties of images, and therefore does not suit CNNs. Gradients and optical flow are usually represented as a two dimensional vectors, which are equivalent to complex numbers. There are many other possibilities, and each representation should be chosen specifically for the task at hand.

\clearpage
\section{Empirical Study - Cell Identification} \label{emp}

In this chapter we evaluate the complex CNN model by considering the problem of cell identification. Cells are circular shaped, and as such have a typical gradient image with a prominent phase structure. Complex valued CNNs might use this structure to produce good results, in a similar manner to the one discussed in section \ref{comp_conv}. We focus on evaluating the complex CNN model, and not on solving this specific problem. To that end, we use a minimalistic network and perform no major manipulations of the data.

We construct a complex CNN for the task of determining whether a given image patch contains a cell, and compare this network with its real valued counterpart. The two networks show comparable results, although the complex network suffers from convergence difficulties. To check the claim that complex CNNs act as a regularization, we examine the behavior of the loss as the optimization progresses. The real valued CNN is shown to be significantly more vulnerable to overfitting. To see whether the network utilizes the phase structure, we visualize the first convolution's kernels. Finally, we comment on the numerical difficulties encountered when training the complex network.

\subsection{Experimental Details}

\begin{wrapfigure}{R}{0.3\textwidth}
	\includegraphics[width=0.3\textwidth]{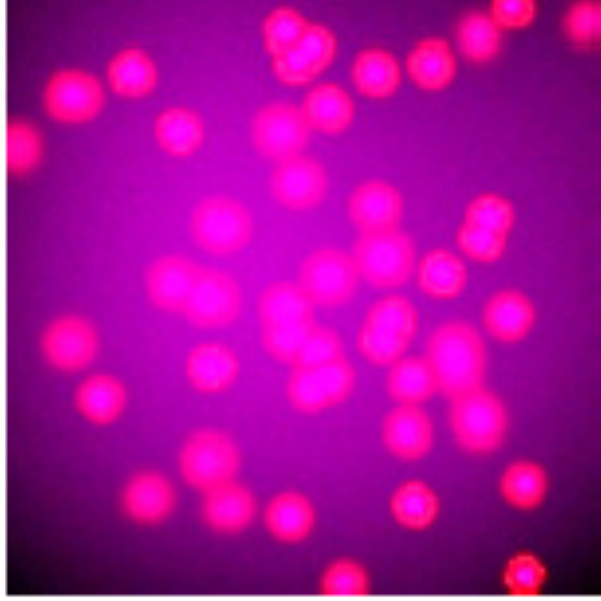}
	\caption{Simulated fluorescence microscopy image created by SIMCEP \label{fig:cell_img}.}
\end{wrapfigure}

In our experiment we use simulated fluorescence microscopy images, taken from  \cite{Lehmussola2007}. These are color images of multiple cells, an seen in figure \ref{fig:cell_img}. To create our dataset we simulate $150\times150$ color images, transform them to gray-scale, and compute their derivatives using the Sobel kernel.   Each gradient image is cropped to a $100$ non overlapping $15 \times 15$ patches.  The real network's input is the derivatives corresponding to a patch, $I_x,I_y$, and the complex network's input is $I_x+\i I_y$. The label assigned to each patch is "cell" if it has at least $10$ pixels belonging to a cell. Example gradients and labels are shown in figure \ref{fig:patches}. The patches were linearly normalized to have values between  $0$ and $1$. Both the training and test sets consist of $10,000$ patches taken from $100$ images. 

\begin{figure}[h!]
	\centering
	\includegraphics[width=0.6\textwidth]{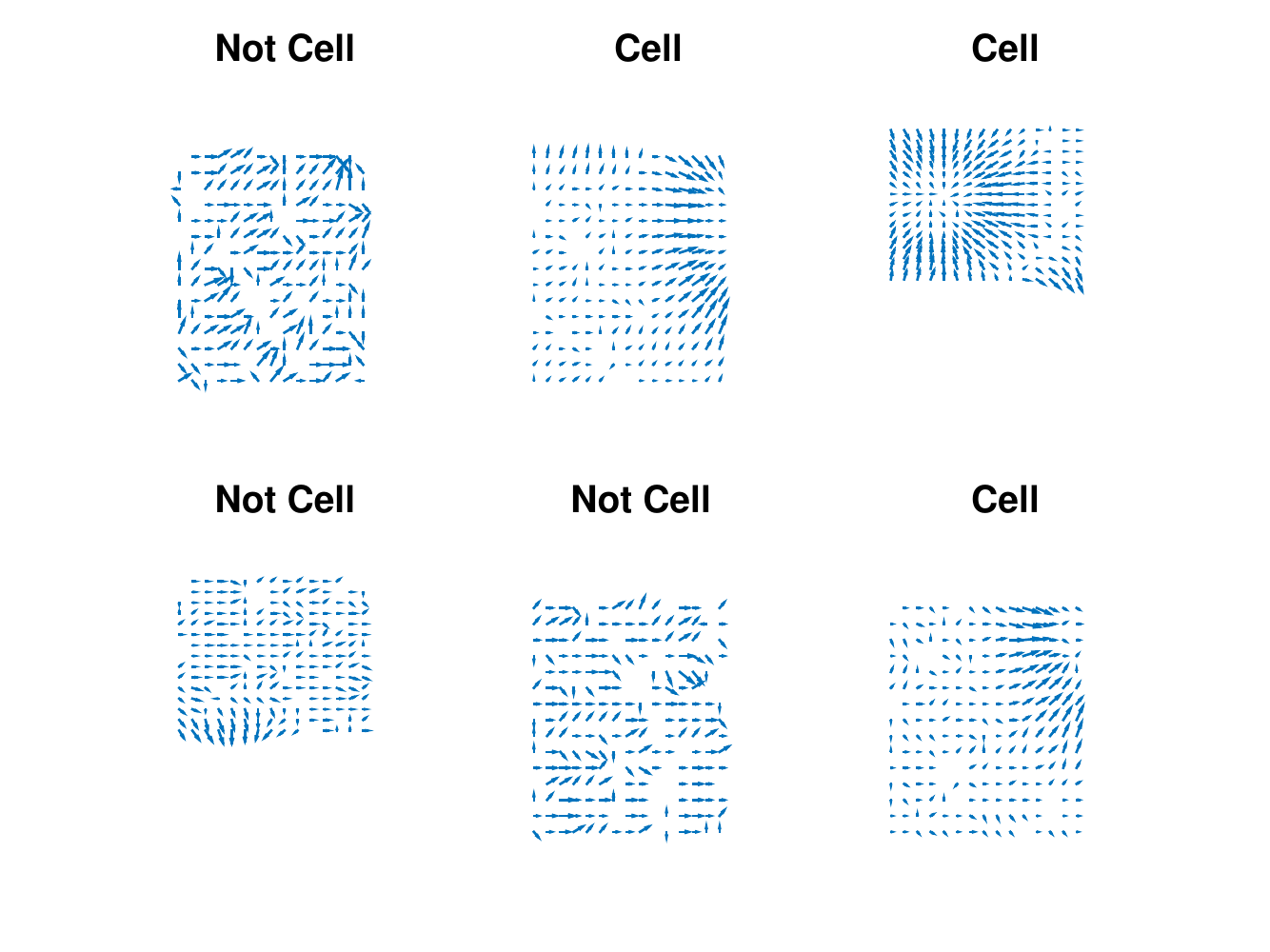}
	\caption{Examples of the networks' input - patches' gradients. The gradients are treated as a complex valued, and shown as a vector field. The labels are shown above each patch.\label{fig:patches}}
\end{figure}

The complex network's architecture we use consists of two convolution layers, each followed by an activation function layer, and a pooling layer. The kernels' sizes in both convolution layers are $5\times 5$ pixels, as the radius of cells is of the order of $5$ pixels. The first pooling layer has a window size of $2\times2$ with a stride of $1$, and the second performs global pooling\footnote{By global pooling we mean pooling over the entire spatial dimensions, across channels, as in \cite{Maxout}.}. As the labels are real valued, we add a projection layer. The resulting network is illustrated in figure \ref{fig:architecture}.

\begin{figure}[h!]
	\centering
	\includegraphics[width =1\textwidth]{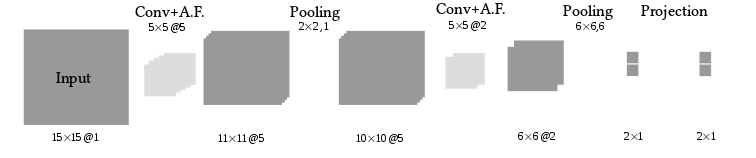}
	\caption{The complex network architecture, with two convolutions, activation function and pooling. To obtain real valued labels, a projection layer is  added. Best results achieved with $\relu$ activation function, $\abs{\cdot}$-pooling and $\abs{\cdot}^2$ projection.\label{fig:architecture}}
\end{figure}

As discussed in chapter \ref{construction}, there are several non trivial building blocks in the complex network, each having multiple options. These include the activation function, pooling method and projection layer. In this network we use $\relu$ as the activation function, and max pooling by magnitude. Other activation functions and pooling methods yielded comparable or inferior results. Since the last layers before the projection are $\relu$ and max pooling, many of the projection layer's inputs are $0$. The $\abs{\cdot}$ function is non differentiable at $0$, so such a setting is problematic for the optimization process, as described in chapter \ref{comp_backprop}. To overcome this, the $\abs{\cdot}^2$ projection was used instead.

We compare the complex network to its real valued equivalent, in the sense described in chapter \ref{regularization}. This network shares the same architecture as the complex one, only with twice the channels and convolution kernels. By construction, the last layer of the real network consists of twice as many neurons as the complex one. As the labels are binary, the final layer has to be two channeled, so we add a fully connected (affine) layer replacing the projection layer in the complex network. The resulting network is shown in figure \ref{fig:real_arch}.

\begin{figure}[h!]
	\centering
	\includegraphics[width = 1\textwidth]{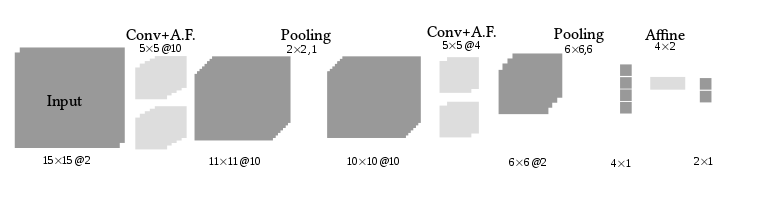}
	\caption{The real network architecture equivalent to the complex one in \ref{fig:architecture}. There are twice as many channels and kernels. To obtain an output of two classes, rather then four the projection layer from \ref{fig:architecture} was replaced by an affine layer. \label{fig:real_arch}}
\end{figure}

Both networks were trained by minimizing the multi-class logistic loss using SGD with Nesterov's acceleration, as presented in \cite{Nesterov}. As we aim to check the regularization capabilities of the model, no regularization methods were applied. For the same reason, the momentum coefficient and learning rate were chosen to maximize the performance over the training set. For the complex model the momentum coefficient is $0.9$ and the learning rate is $0.01$ for the first $2,000$ iterations, and $0.001$ afterwards. For the real valued network, the momentum coefficient is $0.9$ and the learning rate is fixed at $0.1$. We trained both models for $20,000$ iterations with a batch size of $100$, and used the initialization scheme suggested in \cite{Glorot2010}.

\subsection{Comparison With a Real Network}\label{comparison}

We consider the complex model and its real counterpart after training each to achieve the minimal training loss, without any regularization. The final losses and accuracies are presented in table \ref{table:results}. Overall the accuracies are comparable, with the real model performing slightly better. 

\begin{table}[h!]
	\centering
	\begin{tabular}{|l|c|c|c|c|}
		\hline
		& Train loss & Train Accuracy & Test loss & Test Accuracy  \\
		\hline
		Complex network & 0.056 & 97.4\% & 0.0690 & 97.3\% \\
		\hline
		Real network & 0.007 & 99.8 \%  & 0.1450 &  97.5\% \\
		\hline
		
	\end{tabular}
	\caption{ The results of the real and complex model over the training and test set\label{table:results}. The real model's test loss is significantly higher than its training loss, which suggests overfitting. In the complex model both losses are close, which advocates to regularization capabilities. The accuracies do not follow this pattern.}
\end{table}

The training loss of the real network is much lower then its test loss, while those of the complex network are comparable.  Figure \ref{fig:regularization_comparison_graphs} shows the loss rates as the training progresses. In the real network, after a quick decrease of both losses, the training loss nearly vanishes and the test loss rises. Clearly, the real model suffers from overfitting. On the other hand, the complex network does not present overfitting, as the training and test loss of the complex network remain close, and lie between the real network's train and test loss. These results suggest that the complex model serves as a regularization.

The accuracies, however, do not present the same pattern. The real network's test accuracy does not decrease as the loss rises, and is higher than that of the complex network. While these results are puzzling, the possible regularization capabilities of the complex network are not undermined, as they can only be measured with respect to the loss being minimized. More data is needed to see if this phenomenon is repeated for different tasks, and network architectures. 

\begin{figure}[h!]
	\centering
	\begin{subfigure}[Complex network convergence]
		{\includegraphics[width=0.9\textwidth]{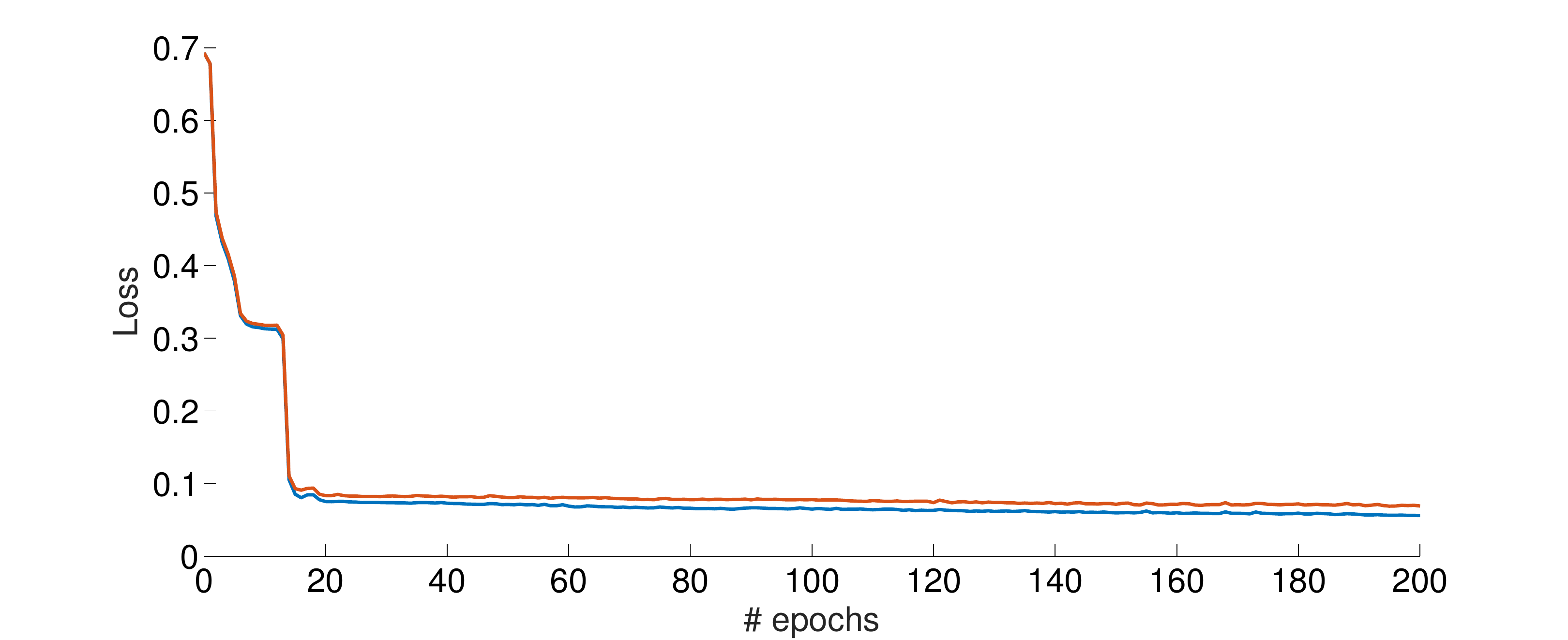}}
	\end{subfigure}
	
	\begin{subfigure}[Real network convergence ]
		{\includegraphics[width=0.9\textwidth]{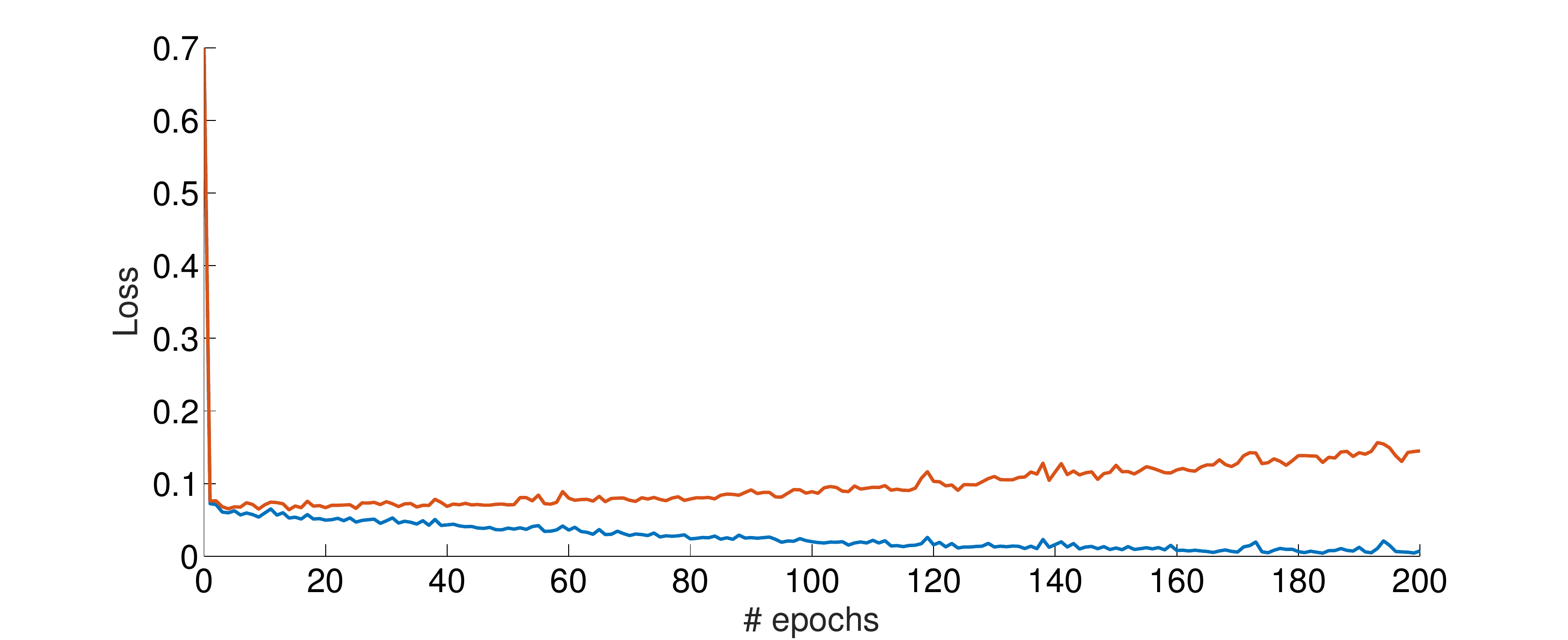}}
	\end{subfigure}

	\caption{The convergence of the real and complex networks with the algorithm's progress. An epoch is the number of iterations in which the total number of examples chosen is equal to the size of the training set, in our setting one epoch is a 100 iterations. In the blue line, the training loss, and in the red line the test loss. The real model suffers from overfitting, while the complex one does not.\label{fig:regularization_comparison_graphs}}
\end{figure}

\subsection{Numerical Difficulties}\label{convergence}

The training of the complex network proved difficult. To demonstrate this effect, we trained the network $20$ times with the same parameters stated above, for $10,000$ iterations. The only differences between trials are due to the random parts of the algorithm - the initialization and mini batch choice in each SGD iteration. Only $4$ times of $20$ has the network achieved training loss that is close to its best. The loss rate over the training set across the training process for these $20$ trials is plotted in figure \ref{fig:rep_lines}. This plot demonstrates the sensitivity of the network to the randomization effects, and its great instability.

\begin{figure}[h!]
	\centering
	\includegraphics[width=0.9\textwidth]{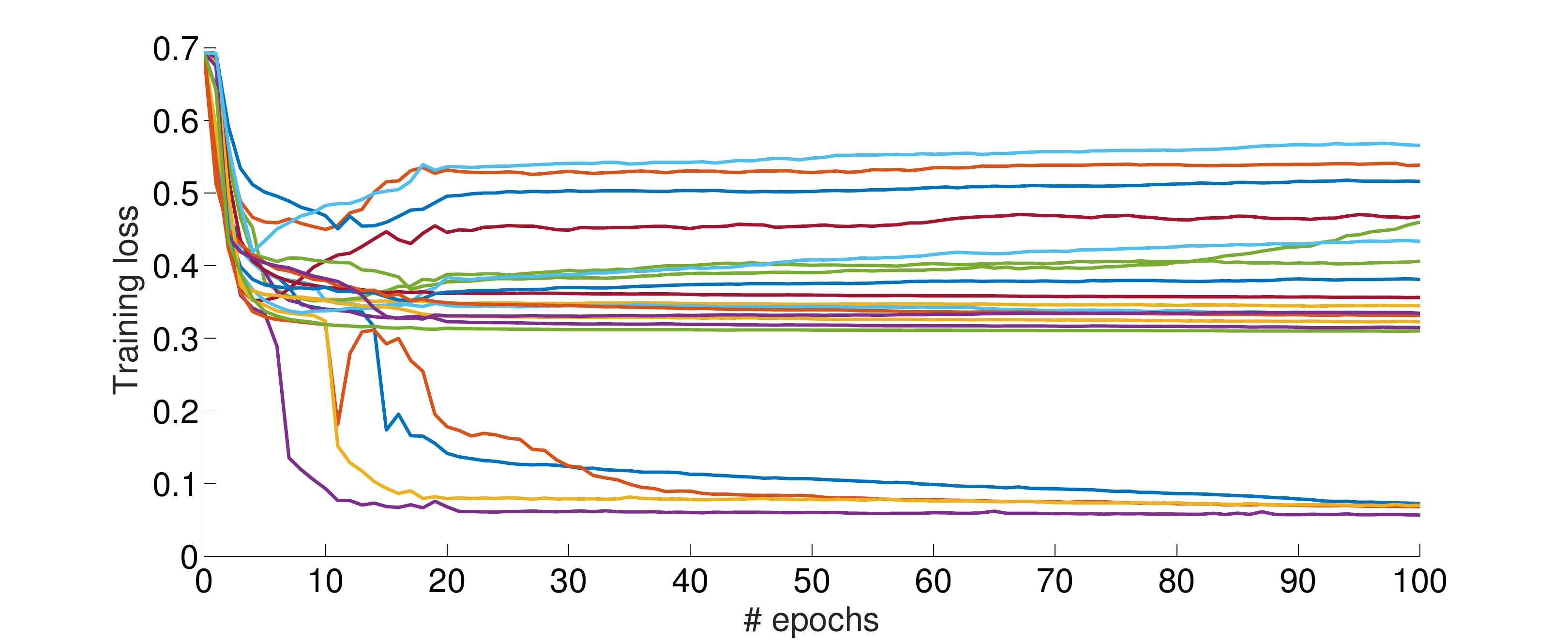}
	\caption{Repeated training of the complex network. Each line is the training loss across the optimization epochs of a single trial. The learning rate is reduced after 20 epochs, for optimal convergence.
		The training process is unstable, and sensitive to the randomization effects. Not all the trials have converged, and among the ones that did, most did not achieve the global minimum. \label{fig:rep_lines}}
\end{figure}

In a similar experiment with the real valued network, all trials yielded similar results, hence the difficulties are likely due to the complex nature of the network. Previous works concerning complex ANNs, reported numerical difficulties as well, for example in \cite{Kim2002}. Unfortunately, they do not shed light on the sources of these difficulties or ways to overcome them.

\subsection{Qualitative Analysis of Kernels} \label{visualization}

Having established that the complex CNN indeed operates as a regularization method, we turn to analyzing the resulting complex model. It is a common practice in CNNs to visualize the first convolution's kernels, to obtain some intuition regarding the network's mechanism. In this section we visualize the kernels of the complex network, and examine if indeed the network identifies common phase structures. This visualization also helps resolving the phase ambiguity discussed in \ref{comp_conv}.

In section \ref{comp_conv}, it was shown that a complex convolution measures the similarity between the input and the kernel's conjugate. It has also been established that two kernels that differ only by a global phase factor are equivalent in their influence. In figure \ref{fig:comp_kernels} the conjugates of the first convolution layer's kernels are presented, with the mean magnitude above each kernel.

The upper left kernel has a much higher mean magnitude than the rest, which suggests it is important to the network's operation. This kernel also has a very distinct phase structure, which resembles that of a cell's center, up to a global multiplicative phase factor. Indeed, if we multiply the upper left kernel by $e^{\frac{\i\pi}{3}}$, we obtain a remarkably similar phase structure to a cell's center. For the sake of clarity we will refer to this kernel as the cell kernel. In figure \ref{fig:cell_kernel_comparizon} we show the cell kernel, with and without the global phase factor, and compare it with a cell's center from an example patch.

Repeated trainings of the network all yielded a similar kernel, which raises the question what is special about the global phase $e^{\frac{\pi\i}{3}}$.  We suggest that this is the phase that allows the response to have positive real and imaginary parts. This is crucial, since otherwise the response would be zeroed out by the following $\relu$ operation. In figure \ref{fig:response} we present the result of convolving the patch in \ref{bla} with the discussed kernel. Indeed the response contains mainly vectors with positive real and imaginary parts.

\begin{figure}[h!]
	\centering{
		\includegraphics[width=0.8\linewidth]{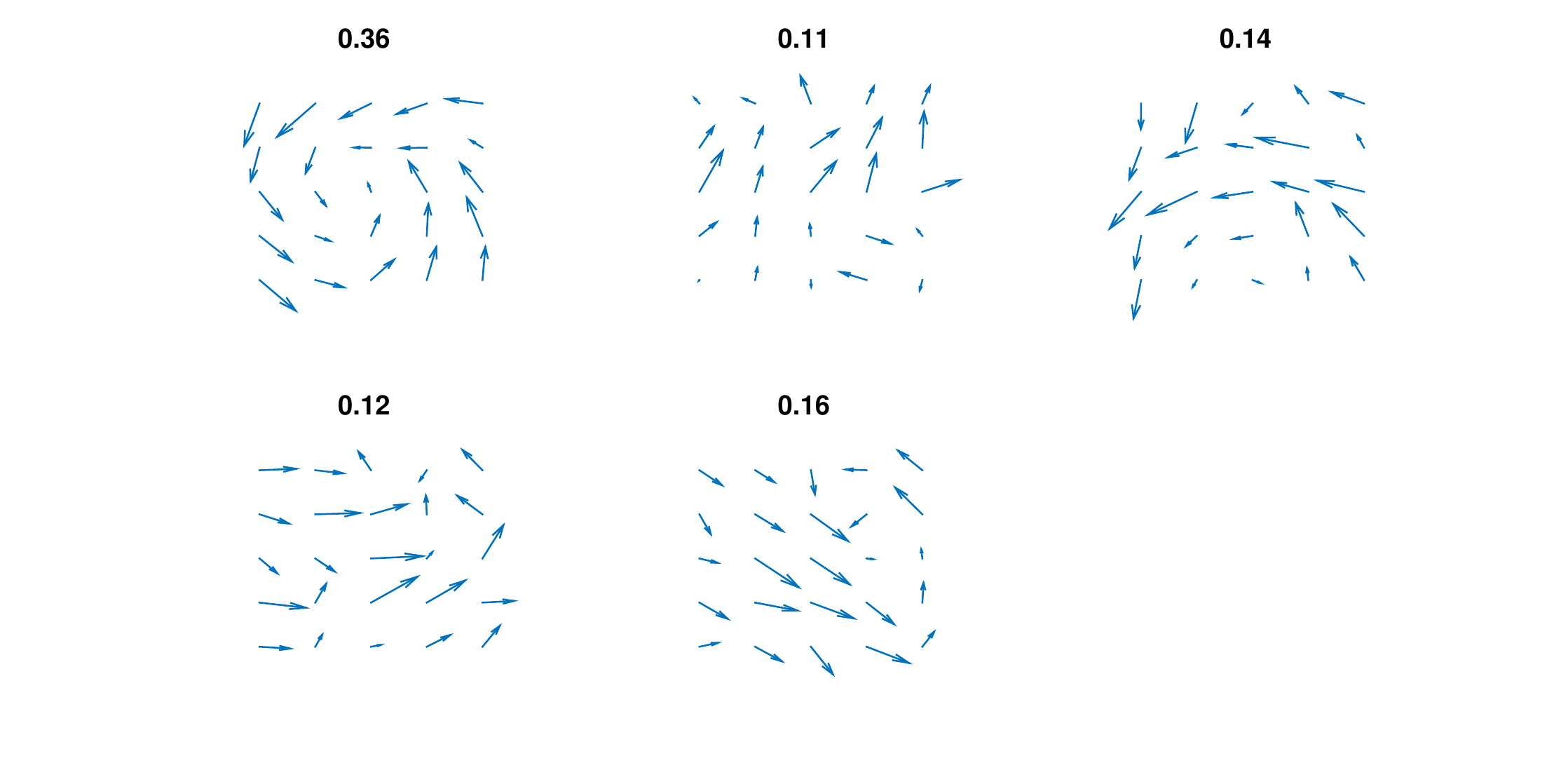}}
	\caption{Kernels of the first convolution of the trained complex network. The kernels are scaled to the same mean for the sake of the presentation. The title of each kernel is it's original mean magnitude. The upper left kernel, referred to as the cell kernel, has a significantly higher mean absolute value, and a prominent phase structure.\label{fig:comp_kernels}}
\end{figure}

\begin{figure}[h!]
	\centering
		\begin{subfigure}[The cell kernel]
			{\includegraphics[width=0.2\textwidth]{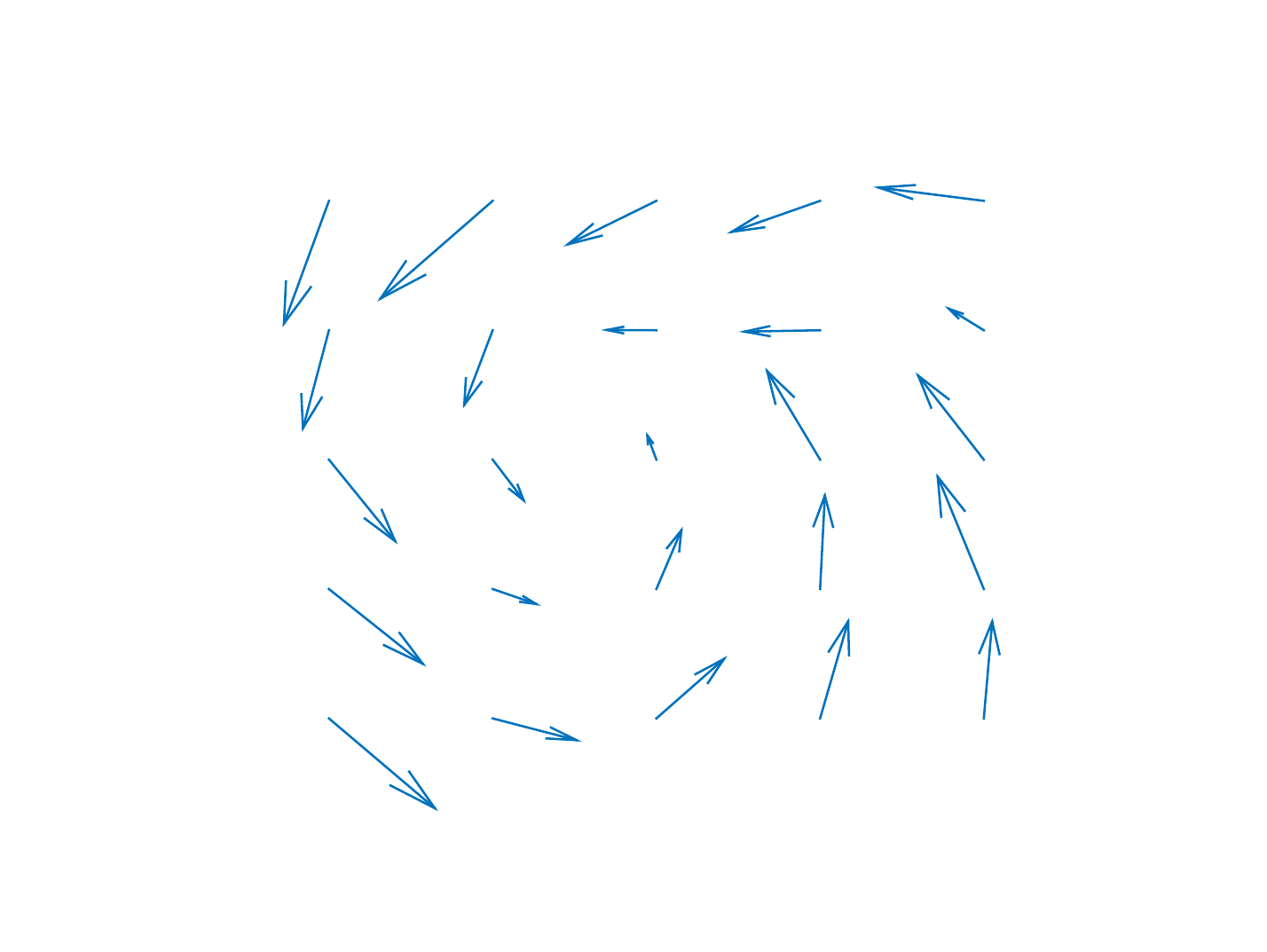} \label{fig:cell_kernel}}
		\end{subfigure}
		\quad
	\begin{subfigure}[The cell kernel multiplied by $e^{\frac{\pi\i}{3}}$]
		{\includegraphics[width=0.2\textwidth]{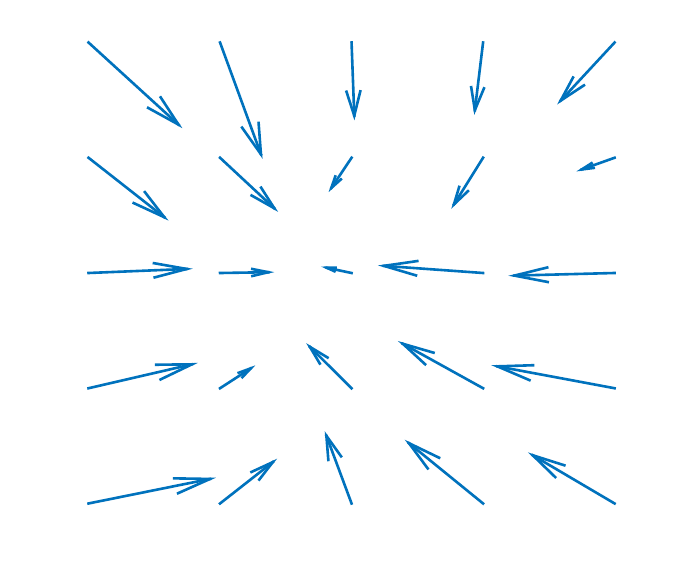} \label{fig:multiplied_cell_kernel}}
	\end{subfigure}
	\quad
	\begin{subfigure}[A close up of the rectangle in \ref{bla}]
		{\includegraphics[width=0.2\textwidth]{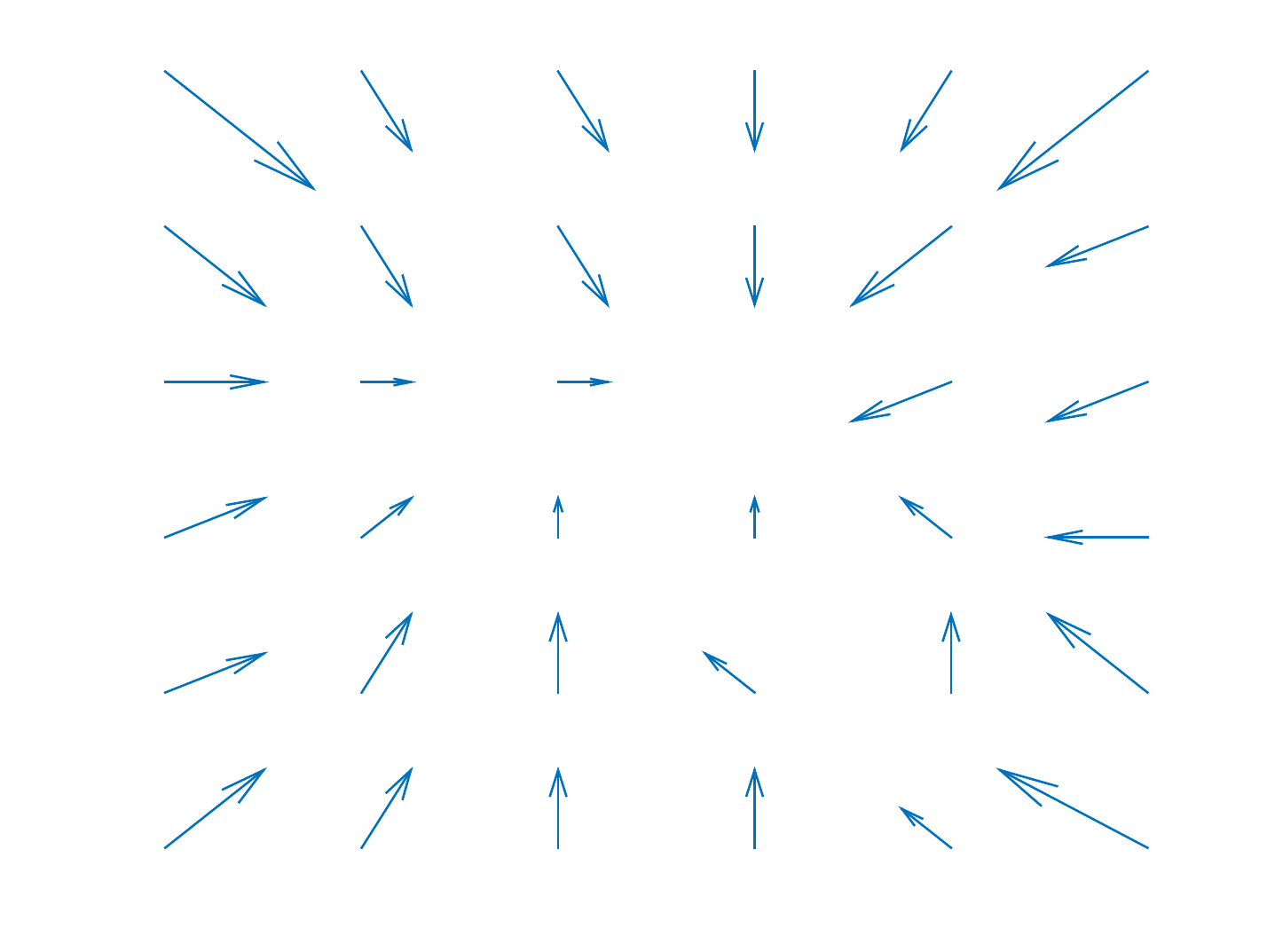}}
	\end{subfigure}
	\quad
	\begin{subfigure}[Gradients' image of a cell]
		{\includegraphics[width=0.2\textwidth]{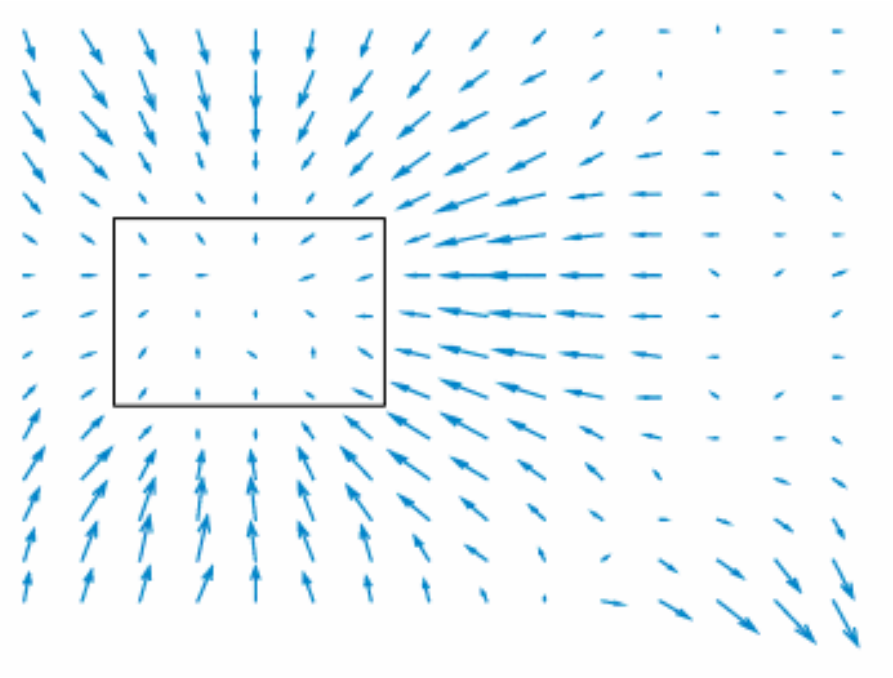}\label{bla}}
	\end{subfigure}

	\caption{Comparison between the learned kernel and a cell center. In the left, the learned kernel multiplied by a global phase. In the right, an example of a cell's gradient image. A close up of the black rectangle in this image is presented in the middle section.\label{fig:cell_kernel_comparizon}}
\end{figure}

\begin{figure}[h!]
	\centering
	\includegraphics[width=0.3\textwidth]{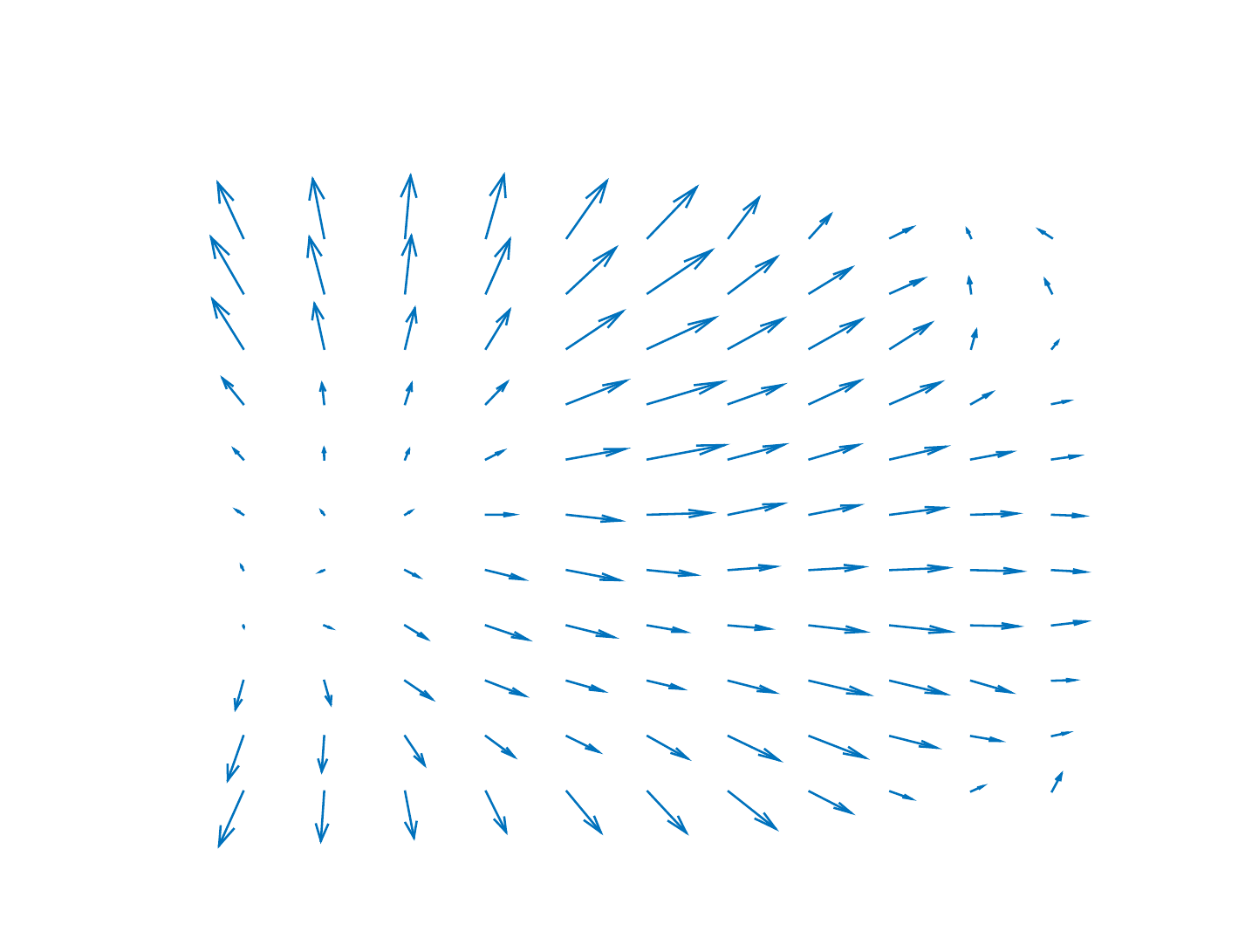}
	\caption{Convolution of the cell patch \ref{bla} with the cell kernel.\label{fig:response}}
\end{figure}

\clearpage
\section{Conclusion and Future Work}

In this work we presented a complex valued CNN model, built as a generalization of the real model, with complex input and weights. Linear operations generalize trivially to the complex domain, while comparison based operations, such as $\relu$ and max pooling, are ill-defined over complex inputs due to the lack of order in the complex field. We described the problems encountered along with possible solutions. We also handled the optimization method for this network, and modified the well known back propagation algorithm.

A theoretical analysis reveals that the resulting model is a regularized subclass of CNNs. A complex convolution is a spacial case of a real valued convolution with twice the parameters, and a tight constraint over the weights. This constraint creates a model cut out for detecting meaningful phase structure.

We explored this model in an empirical setting, by considering the binary classification problem of cell detection - given an image patch, decide whether it contains a cell or not. The input data was gradients images of circular cells, that have a revealing phase structure. 

We trained a complex network and its real valued counterpart for this classification task. The training process of the complex network was riddled with difficulties. Given the best learning parameters, only $20\%$ of the trials converged to a non local minima. However, in the trials that did converge, the results were promising. There was no overfitting present in the complex network, while the real network suffered from it considerably. Moreover, inspecting the kernels of the first convolution layer of the complex network, we have shown that it detected the phase structure typical for a cell center.

Further work should address the optimization difficulties in the training process of the complex model, as this seems to be a major stumbling block for successful application of the model. Given a satisfactory training method, complex networks should be used for other, possibly more complicated challenges. Tackling additional tasks would gain us better understanding of the importance of phase structure in different problems, and hence the benefits of the regularization capabilities. Further experiments should explore the different possibilities suggested for the model's construction, such as pooling by softmax. 

We should also explore the merits of the complex model using different inputs. These include additional two-dimensional image representations, such as optical flow. The model could also benefit other natural signals with an innate complex representation, such as voice signals.

\newpage
\bibliography{thesis}
\bibliographystyle{plain}

\end{document}